\newtheorem{theorem}{Theorem}
\title{System 1\&2 Synergy via Dynamic Model Interpolation}
\author{
    Chenxu Yang$^{1,2}$\footnotemark[1]~\and
    Qingyi Si$^3$\thanks{$\quad$ Equal Contribution}\and
    Chong Tian$^4$\and Xiyu Liu$^{1,2}$\and Dingyu Yao$^{1,2}$\and \\ Chuanyu Qin$^{1,2}$\and 
    Zheng Lin$^{1,2}$\thanks{$\quad$ Zheng Lin is the corresponding author.}\and Weiping Wang$^1$\and Jiaqi Wang$^3$\and
\affiliations
   \textsuperscript{\rm 1}Institute of Information Engineering, Chinese Academy of Sciences, Beijing, China \\
  \textsuperscript{\rm 2}School of Cyber Security, University of Chinese Academy of Sciences, Beijing, China \\
  \textsuperscript{\rm 3}JD.COM \textsuperscript{\rm 4}MBZUAI \\
\emails  
  \{yangchenxu,linzheng\}@iie.ac.cn
}
\begin{document}

\maketitle

\begin{abstract}
Training a unified language model that adapts between intuitive System 1 and deliberative System 2 remains challenging due to interference between their cognitive modes. Recent studies have thus pursued making System 2 models more efficient. However, these approaches focused on output control, limiting what models produce. We argue that this paradigm is misaligned: output length is merely a symptom of the model's cognitive configuration, not the root cause. In this work, we shift the focus to capability control, which modulates \textit{how models think} rather than \textit{what they produce}. To realize this, we leverage existing Instruct and Thinking checkpoints through dynamic parameter interpolation, without additional training. Our pilot study establishes that linear interpolation yields a convex, monotonic Pareto frontier, underpinned by representation continuity and structural connectivity. Building on this, we propose \textbf{DAMI} (\textbf{D}yn\textbf{A}mic \textbf{M}odel \textbf{I}nterpolation), a framework that estimates a query-specific Reasoning Intensity $\lambda(q)$ to configure cognitive depth. For training-based estimation, we develop a preference learning method encoding accuracy and efficiency criteria. For zero-shot deployment, we introduce a confidence-based method leveraging inter-model cognitive discrepancy. 
Experiments on five mathematical reasoning benchmarks demonstrate that DAMI achieves higher accuracy than the Thinking model while remaining efficient, effectively combining the efficiency of System 1 with the reasoning depth of System 2.

\end{abstract}

%

\section{Introduction}


The cognitive evolution of Large Language Models (LLMs) has established a functional dichotomy: lightweight ``System 1" models excel at rapid, intuitive responses, while ``System 2" models such as OpenAI's o1 \citep{o1} and DeepSeek-R1 \citep{deepseekai2025deepseekr1incentivizingreasoningcapability} achieve superior reasoning through extended Chain-of-Thought (CoT) \citep{li202512surveyreasoning}. Ideally, a unified model would seamlessly adapt between these modes, responding swiftly to simple queries while engaging deep deliberation for complex problems \citep{qwen3technicalreport}. However, training such an adaptive model remains an open challenge due to the zero-sum nature of training data \citep{shukor2025scalinglawsoptimaldata}. The verbose, self-reflective style of System 2 reasoning conflicts with the concise responses of System 1, causing mutual interference during joint optimization. Consequently, state-of-the-art LLMs typically release separate Thinking and Instruct checkpoints rather than a single unified model \citep{Qwen3-VL}.


%


\begin{figure}[!t]
  \centerline{\includegraphics[scale=0.26]{./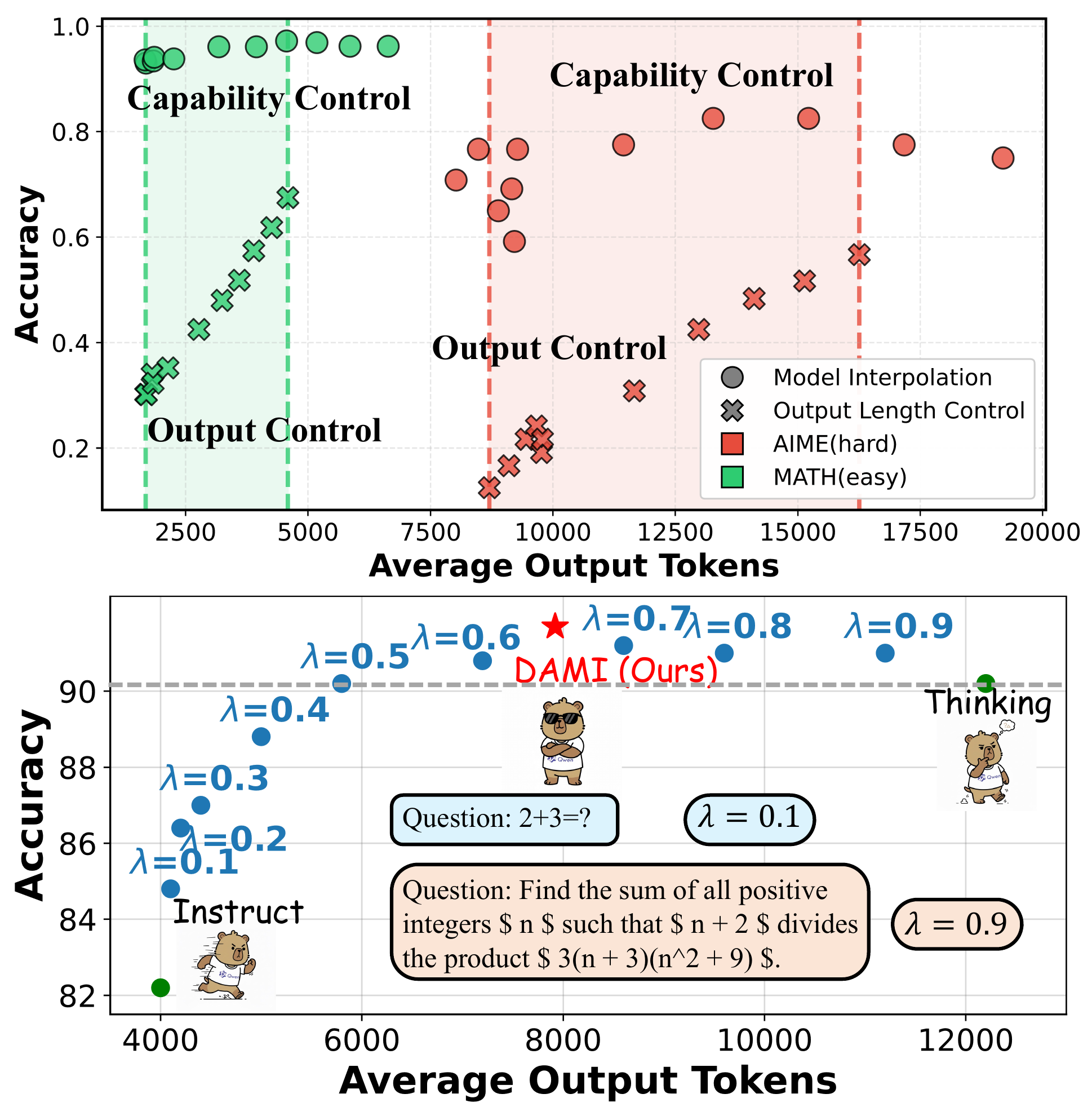}}
  \caption{Capability control outperforms output control across different efficiency levels on both tasks. Linear interpolation between Instruct and Thinking yields a monotonic Pareto frontier.}
  \label{fig-firstpage}
\end{figure}

Given this difficulty, the community has pursued a more tractable alternative: making System 2 models more efficient. This pursuit has spawned various approaches, from training-free methods like token budgeting \citep{han-etal-2025-token-budget}, early exit \citep{yang2025dynamicearlyexitreasoning}, and CoT compression \citep{xia2025tokenskipcontrollablechainofthoughtcompression}, to training-based methods employing reinforcement learning (RL) with length penalties \citep{arora2025traininglanguagemodelsreason,su2025thinkingfastrightbalancing} or trajectory pruning \citep{dai2025sgrpoearlyexitreinforcement,hou2025thinkprunepruninglongchainofthought}. Despite their diversity, these approaches share a common paradigm: \textbf{output control}, which limits \textit{what} models produce. We argue that this framing is misaligned: output length is merely a symptom of the model's cognitive configuration, not the root cause of inefficiency. As illustrated in Figure \ref{fig-firstpage}, constraining generation with token budgets risks truncating the reasoning process, resulting in incomplete chains of thought that severely degrade performance on challenging tasks.

In this work, we shift the focus from output control to \textbf{capability control}, which modulates \textit{how models think} rather than \textit{what they produce}. Our key observation is that by interpolating parameters between existing System 1 and System 2 checkpoints, we can smoothly adjust the model's cognitive intensity without additional training. As shown in Figure \ref{fig-firstpage} (upper), under comparable efficiency constraints, capability control consistently outperforms output control by producing naturally concise yet complete solutions.

A natural question arises: does parameter interpolation yield predictable behavior? Our pilot experiments in Section \ref{3.1} provide affirmative evidence. We find that increasing the interpolation coefficient $\lambda$ (where $\lambda$=0 corresponds to Instruct and $\lambda$=1 to Thinking) produces a smooth, monotonic Pareto frontier between accuracy and efficiency, as shown in Figure \ref{fig-firstpage} (lower), with continuous transitions in representation space. This predictability is grounded in the structural similarity between models: cosine similarity exceeds 0.99 across all layers, indicating that both models satisfy Linear Mode Connectivity \citep{frankle2020linearmodeconnectivitylotteryLMC} and reside within the same optimization basin. These findings establish parameter interpolation as a principled mechanism for capability control.


Building on this foundation, we propose \textbf{DAMI} (\textbf{D}yn\textbf{A}mic \textbf{M}odel \textbf{I}nterpolation), a framework that dynamically estimates a query-specific Reasoning Intensity $\lambda(q) \in [0,1]$ to configure the model's cognitive depth for each input. We explore two complementary methods tailored for different deployment scenarios: DAMI-Pref, a preference learning method that jointly encodes accuracy and efficiency criteria, is suited for accuracy-critical applications where in-domain training data is available; DAMI-Conf, a confidence-based method that leverages inter-model cognitive discrepancy, enables rapid deployment across diverse domains without requiring any training data. Experiments on five mathematical reasoning benchmarks show that DAMI improves accuracy by 1.6--3.4\% while reducing token consumption by 29--40\% compared to the Thinking model, outperforming static merging, early-exit, and routing baselines.

Our contributions are summarized as follows:
 \begin{itemize}
	\item We advocate a paradigm shift from output control to capability control for efficient reasoning, and identify dynamic model interpolation as a low-cost realization that leverages existing checkpoints, bypassing the difficulty of training  System1\&2 unified models
	\item We propose DAMI with two complementary $\lambda(q)$ 
    estimation methods: preference-based learning for data-available scenarios and confidence-based inference for zero-shot deployment.
    \item Extensive experiments demonstrate that DAMI consistently establishes superior Pareto frontiers across diverse benchmarks 
    and model families.
\end{itemize}

\section{Related Work}

\subsection{Efficient Reasoning}
The inherent trade-off between System 1's efficiency and System 2's accuracy has motivated a growing body of research on efficient reasoning, which can be categorized into single-model optimization and multi-model collaboration. Single-model approaches 
include early-exit methods that terminate generation upon sufficient 
confidence \citep{yang2025dynamicearlyexitreasoning, 
dai2025sgrpoearlyexitreinforcement}, CoT compression that removes 
redundant reasoning steps \citep{xia2025tokenskipcontrollablechainofthoughtcompression}, adaptive reasoning that adjusts output length based on query 
complexity \citep{luo2025adar1hybridcotbileveladaptive, 
shen2025dastdifficultyadaptiveslowthinkinglarge}, and other trajectory-revised methods \citep{yang2025testtimepromptintervention}. While adaptive methods share our motivation, they rely on RL with length penalties, incurring high training costs and potential performance degradation on difficult problems. Despite their diversity, these single-model approaches predominantly follow the output control paradigm, constraining \textit{what} models produce.
Multi-model approaches leverage complementary capabilities across 
models. Task decomposition methods delegate simple steps to lightweight 
models while reserving complex reasoning for stronger ones 
\citep{akhauri2025splitreasonlearningoffloadreasoning, 
fan2025pricesecondthoughtevaluation}. LLM Routing dynamically 
dispatches queries to suitable models from a pool 
\citep{ong2025routellmlearningroutellms, zhang2025routerr1teachingllmsmultiround}. 
Model merging statically integrates parameters of different models to combine 
their strengths \citep{wu2025unlockingefficientlongtoshortllm, Yao2025acm}. 
\textbf{Our work bridges routing and merging by enabling continuous, 
query-adaptive interpolation} rather than discrete model selection or static merging.

\subsection{Model Merging}
Model merging integrates parameters from multiple specialized models 
into a unified model without requiring original training data. It has 
been applied to mitigate catastrophic forgetting in continual learning 
\citep{schumann2023backwardcompatibilitydataupdates}, integrate 
knowledge in multi-task learning \citep{yang2024adamergingadaptivemodelmerging, 
ilharco2023editingmodelstaskarithmetic}, and aggregate updates in 
federated learning \citep{wang2020federatedlearningmatchedaveraging}. 
Recent work has introduced merging into efficient reasoning, with 
leading LLM teams releasing merged models that balance reasoning 
depth and efficiency \citep{Wu2025revisiting, kimiteam2025kimik15scalingreinforcement,DBLP:journals/corr/abs-2311-15316,yang-etal-2025-weights}. 
However, existing approaches apply fixed merging coefficients across 
all queries. To our knowledge, we are the first to propose query-aware dynamic merging between System 1 and System 2 models, enabling fine-grained integration of their complementary strengths for adaptive control over reasoning depth.

\section{Methodology}

\subsection{Why Interpolation Enable Capability Control} \label{3.1}

Before formulating our dynamic framework, we first validate that parameter interpolation can serve as a continuous and predictable control mechanism for reasoning. We conduct a pilot study using static linear interpolation between the Instruct model $\Theta^{(\text{I})}$ and the Thinking model $\Theta^{(\text{T})}$ with a fixed coefficient $\lambda \in [0,1]$ across the test set.
\begin{equation}
\Theta^{(\text{M})} = \lambda \Theta^{(\text{T})} + (1 - \lambda) \Theta^{(\text{I})}
\label{eq:dynamic_interpolation}
\end{equation}

\paragraph{Performance Monotonicity.} 
As shown in Figure \ref{fig-firstpage}, 
increasing $\lambda$ from 0 to 0.9 yields a smooth improvement in 
accuracy alongside a predictable increase 
in token consumption.  Crucially, this trajectory forms a convex Pareto frontier where intermediate interpolations can outperform both endpoints: achieving higher accuracy than the Instruct model while requiring fewer tokens than the Thinking model. 
This confirms that $\lambda$ serves as a reliable proxy for 
reasoning depth, enabling fine-grained efficiency-accuracy trade-offs.

\paragraph{Representation Continuity.} 
What underlies this behavioral smoothness? We analyze the 
hidden states of $\langle\text{think}\rangle$ token using Principal Component Analysis (PCA) across 50 samples on the MATH-500. As 
shown in Figure \ref{fig-pilot}(a), representations of 
interpolated models form a continuous trajectory connecting 
the Instruct and Thinking endpoints. The first principal component strongly correlates with 
$\lambda$ ($r$=0.974), indicating that interpolation induces a 
gradual cognitive transition rather than discrete jumps in the representation space.

\paragraph{Structural Connectivity.} 
This representational continuity is grounded in the geometric 
properties of the parameter space. Figure \ref{fig-pilot}(b) 
shows that the Instruct and Thinking models maintain high 
cosine similarity ($>$0.99) across all transformer layers, 
satisfying the Linear Mode Connectivity (LMC) property 
\citep{frankle2020linearmodeconnectivitylotteryLMC,zhan2025analyzingrolepermutationinvarianceLMC}. 
This indicates that both models reside within the same 
optimization basin, ensuring that interpolation traverses a 
valid path in the loss landscape. Notably, the L2 distance 
peaks in middle-to-deep layers, suggesting that System 2 
capabilities are encoded through targeted modifications in 
these regions rather than global parameter shifts.

\paragraph{Key Insight.} 

These findings establish a causal chain: structural alignment 
in parameter space $\rightarrow$ continuous transitions in 
representation space $\rightarrow$ monotonic Pareto frontier 
in performance. This connectivity transforms reasoning resource 
allocation into a well-posed prediction problem, where estimating 
the optimal $\lambda(q)$ directly determines model behavior. We provide formal theoretical justifications for each component of this causal chain, including proofs for interpolation 
validity (Appendix \ref{appendix:lmc}), performance monotonicity 
(Appendix \ref{appendix:monotonicity}), and representation continuity 
(Appendix \ref{appendix:lipschitz}). Together, these properties ensure that model interpolation behaves predictably and monotonically, providing the theoretical foundation for fine-grained, query-level dynamic merging. This validates our core premise: $\lambda$ can serve as a reliable control knob for reasoning intensity, enabling the adaptive framework we introduce next.

\begin{figure}[!t]
  \centerline{\includegraphics[scale=0.24]{./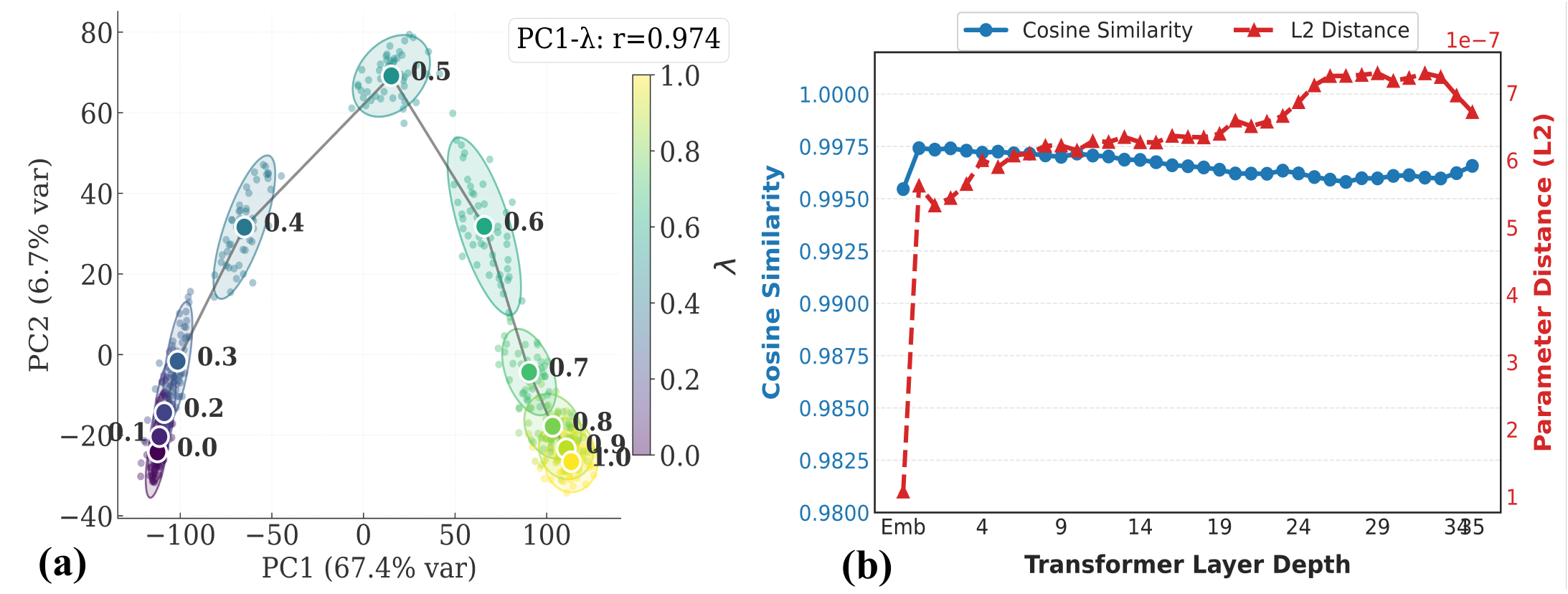}}
  \caption{The monotonicity and continuity of reasoning intensity.}
  \label{fig-pilot}
\end{figure} 





\begin{figure*}[htbp]
  \centerline{\includegraphics[scale=0.35]{./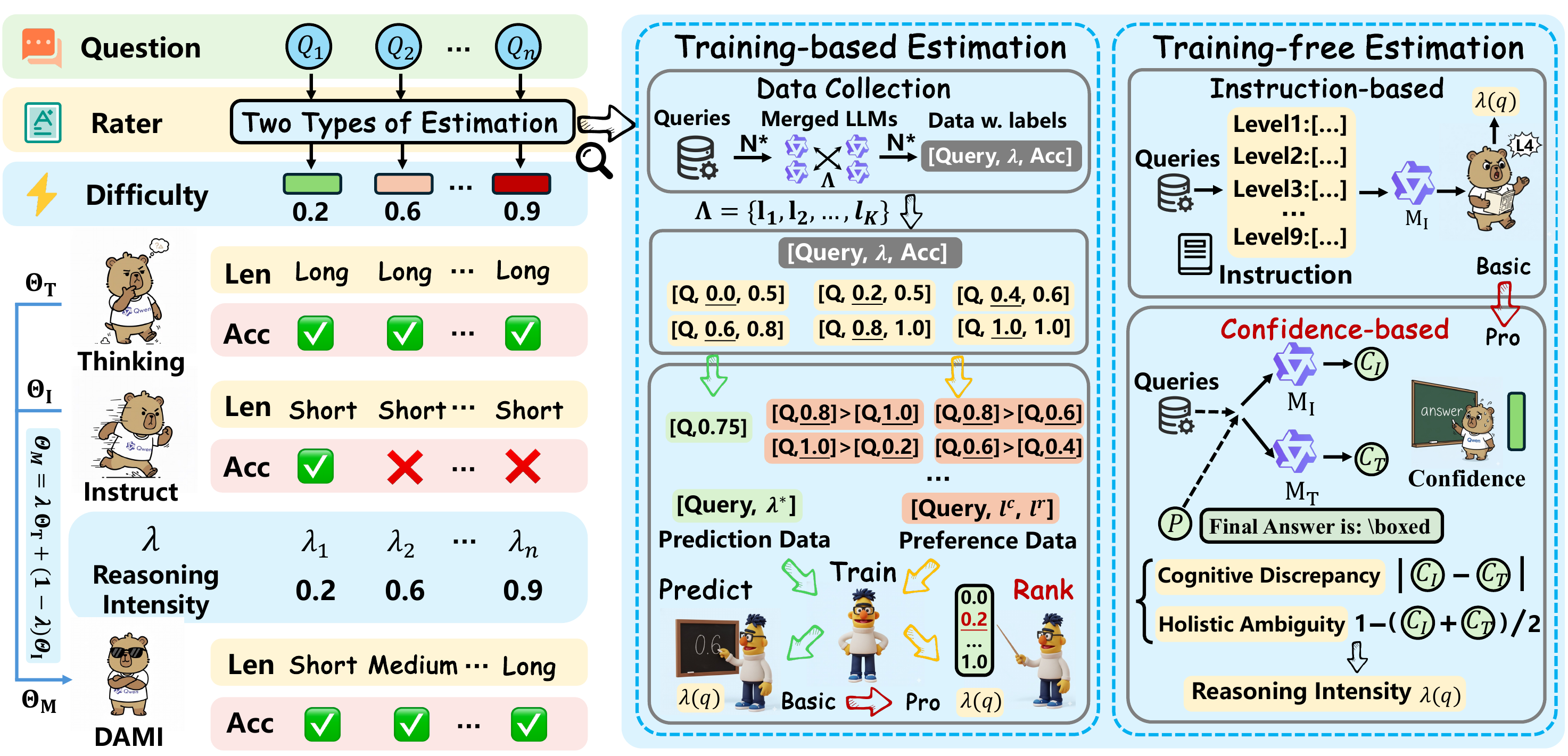}}
  \caption{An overview of our DAMI method. DAMI estimates query difficulty through either preference learning (training-based) or confidence signals (training-free), then dynamically adjusts the Reasoning Intensity $\lambda(q)$ to configure cognitive depth accordingly. }
  \label{fighuman}
\end{figure*}

\subsection{The Dynamic Interpolation Framework} \label{3.2}

While Section \ref{3.1} validates that parameter interpolation 
provides a reliable control mechanism, a fixed $\lambda$ 
is inherently suboptimal since different queries require different reasoning depths.  DAMI addresses this by making $\lambda$ query-dependent:
\begin{equation}
\Theta^{(\text{M})}(q) = \lambda(q) \Theta^{(\text{T})} + (1 - \lambda(q)) \Theta^{(\text{I})}
\end{equation}
We term $\lambda(q) \in [0, 1]$ the \textbf{Reasoning Intensity}, 
which modulates the cognitive depth for each input. The key 
technical challenge is estimating the optimal $\lambda(q)$. 
We explore two complementary methods: \textbf{DAMI-Pref}, a preference learning approach that jointly encodes accuracy and efficiency criteria, suited for accuracy-critical applications where in-domain training data is available; and \textbf{DAMI-Conf}, a confidence-based approach that leverages inter-model cognitive discrepancy, enabling rapid deployment across diverse domains without requiring any training data.

\subsection{Training-based Estimation of $\lambda$}


\subsubsection{Baseline: Regression-based Estimation}

A straightforward solution is to train a lightweight router 
model $R_\phi$ to regress a pre-computed target $\lambda^*(q)$. 
We profile each query $q$ across $J$ discrete coefficients 
$\Lambda = \{l_1, \dots, l_J\}$, measuring accuracy via 
$N$-sample evaluation:
\begin{equation}
\text{Acc}^{l_j}(q) = \frac{1}{N} \sum_{n=1}^{N} \mathbb{I}(\mathcal{M}^{l_j}_n(q), a)
\end{equation}
where $\mathcal{M}^{l_j}_n(q)$ is the response from a model 
merged with coefficient $l_j$, and $\mathbb{I}(\cdot, \cdot)$ 
is a binary evaluation function. The target $\lambda^*(q)$ is 
defined as the minimum coefficient achieving the highest accuracy, 
and the router is trained with MSE loss:
\begin{equation}
\mathcal{L}(\phi) = \frac{1}{|\mathcal{D}_{\text{train}}|} 
\sum_{q \in \mathcal{D}_{\text{train}}} (R_{\phi}(q) - \lambda^*(q))^2
\end{equation}

However, this approach suffers from two weaknesses: (1) point-wise 
labels $\lambda^*(q)$ are inherently noisy due to sampling variance, 
and (2) regression requires large-scale data for robust generalization. 
These limitations motivate our preference-based method.

\subsubsection{DAMI-Pref: Preference-based Estimation}

Instead of regressing noisy labels, we recast $\lambda$ estimation 
as a preference learning problem. A lightweight reward model $R_\psi$ 
is trained to score the relative efficacy of different coefficients, 
offering three advantages: (1) pairwise comparisons are more robust 
to noise, (2) preference criteria can jointly encode accuracy and 
efficiency, and (3) the method requires substantially less training data.

\paragraph{Preference Data Construction.}
We construct policy performance tuples $P(q, l_j) = \langle l_j, 
\text{Acc}(q, l_j), \text{Cost}(q, l_j) \rangle$, where Cost is 
the average token count. Preference pairs are generated according 
to a hierarchical criterion:
\begin{equation}
P_i \succ P_j \iff
\begin{cases}
\text{Acc}_i > \text{Acc}_j + \delta_{\text{acc}}  \\
|\text{Acc}_i - \text{Acc}_j| \le \delta_{\text{acc}} \land \text{Cost}_i < \text{Cost}_j 
\end{cases}
\end{equation}
where $\delta_{\text{acc}}$ is a small tolerance threshold to account for sampling noise.
This criterion prioritizes accuracy while using efficiency as 
a tiebreaker, directly reflecting the practical goal of 
achieving high performance at minimal cost. Pairs without 
clear preference are discarded to ensure signal quality. This process is applied to all pairs to construct the final dataset:
\begin{equation}
\mathcal{D}_{\text{pref}} = \{ (q, l_c, l_r) \mid P(q, l_c) \succ P(q, l_r) \}
\end{equation}

\paragraph{Optimization and Inference.}
The reward model is optimized via binary cross-entropy loss:
\begin{equation}
\mathcal{L}(\psi) = - \mathbb{E}_{(q, l_c, l_r) \sim \mathcal{D}_{\text{pref}}} 
\left[ \log \sigma \left( R_{\psi}(q, l_c) - R_{\psi}(q, l_r) \right) \right]
\end{equation}

\newcommand{\annotate}[3]{%
    #1\raisebox{-0.5ex}{\scriptsize\textcolor{#2}{#3}}%
}

\setlength{\tabcolsep}{2.5pt}
\begin{table*}[h!]
\centering
\scalebox{0.75}{
\begin{tabular}{@{}l|cccc|cccc|cccc|cccc|cccc|ccc@{}} 
\toprule
 \multirow{3}{*}{\textbf{Method}} 
 & \multicolumn{4}{c|}{\textbf{GSM8K}}& \multicolumn{4}{c|}{\textbf{MATH-500}} & \multicolumn{4}{c|}{\textbf{AMC}} & \multicolumn{4}{c|}{\textbf{AIME24}}  & \multicolumn{4}{c|}{\textbf{AIME25}}  & \multicolumn{3}{c}{\textbf{Overall}}
 \\
   & {Acc$\uparrow$} & {Tok$\downarrow$} & CR$\downarrow$ & {Think} 
   & {Acc$\uparrow$} & {Tok$\downarrow$} & CR$\downarrow$ & {Think} 
   & {Acc$\uparrow$} & {Tok$\downarrow$} & CR$\downarrow$ & {Think} 
   & {Acc$\uparrow$} & {Tok$\downarrow$} & CR$\downarrow$ & {Think} 
   & {Acc$\uparrow$} & {Tok$\downarrow$} & CR$\downarrow$ & {Think} 
   & {Acc$\uparrow$} & CR$\downarrow$ & {Think}   \\ 

   & {2$\times$} & \#N & \% & \#R 
   & {2$\times$} & \#N & \% & \#R 
   & {8$\times$} & \#N & \% & \#R 
   & {8$\times$} & \#N & \% & \#R 
   & {8$\times$} & \#N & \% & \#R 
   & - & \% & \#R  \\ 
   
 \midrule

\multicolumn{24}{l}{{\cellcolor[rgb]{0.957,0.957,0.957}}\textit{\textbf{Qwen3-4B-2507-Thinking/Qwen3-4B-2507-Instruct}}} \\
Instruct & 94.3 & 349 & 23.1 & 0 & 93.1 & 1741 & 26.1 & 0 & 93.8 & 3034 & 27.4 & 0 & 61.7 & 8938 & 46.5 & 0 & 49.2 & 8384 & 39.3 & 0 & 78.4 & 37.5 & 0 \\
\rowcolor[rgb]{0.957,0.878,0.702}
Thinking & \textbf{95.5} & 1510 & 100 & 100 & 96.1 & 6668 & 100 & 100 & 99.4 & 11076 & 100 & 100 & 78.3 & 19241 & 100 & 100 & 71.7 & 21340 & 100 & 100 & 88.2 & 100 & 100 \\
\hline
DEER & \textbf{95.5} & 1059 & 70 & 100 & 95.4 & 6125 & 92 & 100 & \textbf{100} & 10084 & 91 & 100 & 83.3 & 18065 & 94 & 100 & 76.7 & 19287 & 90 & 100 & 90.2 & 87 & 100 \\
\hline
TA & 95 & 1225 & 81 & 100 & \textbf{96.5} & 4671 & 70 & 100 & \textbf{100} & 7760 & 70 & 100 & 78.3 & 13515 & 70 & 98 & 73.3 & 15546 & 73 & 98 & 88.6 & 73 & 99 \\
TIES & 94 & 363 & 24 & 0 & 94.8 & 1899 & 28 & 0 & 95.6 & 3012 & 27 & 0 & 68.3 & 8141 & 42 & 0 & 51.7 & 8419 & 39 & 0 & 80.9 & 32 & 0 \\
MI-03 & 94 & 383 & 25 & 0 & 95.3 & 1933 & 29 & 0 & 96.2 & 3381 & 31 & 0 & 67.5 & 9708 & 50 & 0 & 51.7 & 9280 & 43 & 0 & 81 & 36 & 0 \\
MI-05 & 95.1 & 1084 & 72 & 83 & 96.1 & 3213 & 48 & 72 & 96.2 & 5138 & 46 & 61 & 73.3 & 9321 & 48 & 50 & 65.8 & 10533 & 49 & 54 & 85.3 & 53 & 64 \\
MI-07 & \textbf{95.4} & 1222 & 81 & 100 & \textbf{96.4} & 4617 & 69 & 100 & \textbf{100} & 7501 & 68 & 100 & \textbf{82.5} & 13993 & 73 & 100 & 75 & 16086 & 75 & 98 & 89.9 & 73 & 100 \\
\hline
Routing & 94.5 & 430 & 28 & 2.7 & \textbf{96.4} & 5435 & 82 & 53.8 & \textbf{100} & 10356 & 93 & 85 & 76.7 & 19861 & 103 & 100 & 73.3 & 21035 & 99 & 100 & 88.2 & 81 & 68.3 \\
\textit{DAMI-Pred} & 94.4 & 365 & 24 & 0 & 95.6 & 1886 & 28 & 0 & 95 & 3477 & 31 & 0 & 73.3 & 7703 & 40 & 0 & 53.3 & 8776 & 41 & 0 & 83.3 & 33 & 0 \\
\textit{DAMI-Prompt} & \textbf{95.5} & 1028 & 68 & 59 & 95 & 2832 & 42 & 27 & 95.6 & 4715 & 43 & 30 & 80 & 10648 & 55 & 76 & 75 & 13009 & 61 & 88 & 88.2 & 54 & 56 \\
\rowcolor[rgb]{0.87,0.94,1}
\textit{DAMI-Pref} & 94.5 & 843 & 56 & 25 & \textbf{96.4} & 4315 & 65 & 64 & \textbf{98.8} & 7872 & 71 & 86 & \textbf{85} & 15412 & 80 & 95 & \textbf{83.3} & 17488 & 82 & 92 & \textbf{91.6} & 71 & 72 \\
\rowcolor[rgb]{0.87,0.94,1}
\textit{DAMI-Conf} & 94.5 & 856 & 57 & 41 & 96.2 & 2954 & 44 & 34 & \textbf{98.8} & 5891 & 53 & 55 & \textbf{85} & 13485 & 70 & 79 & \textbf{79.2} & 16412 & 77 & 94 & \textbf{90.7} & 60 & 61 \\

 \midrule
\multicolumn{24}{l}{{\cellcolor[rgb]{0.957,0.957,0.957}}\textit{\textbf{DeepSeek-R1-Distill-Qwen-7B/Qwen2.5-Math-7B}}} \\
Instruct & 55.3 & 1007 &- & 0 & 52 & 1268 &- & 0 & 42.5 & 1463 &- & 0 & 20 & 1470 &- & 0 & 3.3 & 1104 &- & 0 & 34.6 & - & 0 \\
\rowcolor[rgb]{0.957,0.878,0.702}
Thinking & 89.6 & 1886 & 100 & 100 & 86 & 5881 & 100 & 100 & \textbf{87.5} & 8600 & 100 & 100 & 43.3 & 17887 & 100 & 100 & \textbf{30} & 21681 & 100 & 100 & 67.3 & 100 & 100 \\
\hline
DEER & 89.5 & 873 & 46 & 100 & 88.8 & 3140 & 53 & 100 & 82.5 & 7398 & 86 & 100 & \textbf{45} & 13945 & 78 & 100 & \textbf{30} & 15941 & 74 & 100 & 67.2 & 67 & 100 \\
\hline
TA & 90.7 & 769 & 41 & 99 & 88.8 & 2670 & 45 & 97 & 78.1 & 4319 & 50 & 96 & 39.2 & 9396 & 53 & 92 & 25.8 & 11189 & 52 & 83 & 64.5 & 48 & 93 \\
TIES & 86.5 & 346 & 18 & 0 & 80.2 & 827 & 14 & 0 & 58.1 & 1263 & 15 & 0 & 16.7 & 1739 & 10 & 0 & 10 & 1827 & 8 & 0 & 50.3 & 13 & 0 \\
MI-03 & 85.5 & 379 & 20 & 0 & 76.8 & 1501 & 26 & 0 & 57.5 & 1817 & 21 & 0 & 23.3 & 3459 & 19 & 0 & 16.7 & 4485 & 21 & 0 & 52 & 21 & 0 \\
MI-05 & \textbf{91.9} & 617 & 33 & 71 & 83.6 & 2338 & 40 & 13 & 72.5 & 4462 & 52 & 8 & 20 & 15330 & 86 & 0 & 20 & 10665 & 49 & 0 & 57.6 & 52 & 18 \\
MI-07 & 91.2 & 902 & 48 & 99 & 88.2 & 3243 & 55 & 95 & 85 & 4867 & 57 & 92 & 36.7 & 18693 & 105 & 50 & 30 & 17169 & 79 & 60 & 66.2 & 69 & 79 \\
\hline
Routing & 64.5 & 1045 & 55 & 0 & 63.6 & 1303 & 22 & 4.4 & 50 & 1425 & 17 & 5 & 25 & 1875 & 10 & 20 & 6.7 & 2851 & 13 & 10 & 42 & 24 & 7.9 \\
\textit{DAMI-Pred} & 83.5 & 410 & 22 & 0 & 79.6 & 1823 & 31 & 0 & 62.5 & 3103 & 36 & 20 & 26.7 & 13209 & 74 & 31 & 13.3 & 13813 & 64 & 26 & 53.1 & 45 & 15.4 \\
\textit{DAMI-Prompt} & 82.1 & 431 & 23 & 0 & 76.2 & 1657 & 28 & 0 & 55 & 1697 & 20 & 0 & 20 & 3962 & 22 & 0 & 10 & 4962 & 23 & 0 & 48.7 & 23 & 0 \\
\hline
\rowcolor[rgb]{0.87,0.94,1}
\textit{DAMI-Pref} & \textbf{92.4} & 1121 & 59 & 48 & \textbf{89.4} & 3952 & 67 & 75 & \textbf{88.8} & 5819 & 68 & 88 & \textbf{45} & 14561 & 81 & 92 & \textbf{32.5} & 16837 & 78 & 94 & \textbf{69.6} & 71 & 79.4 \\
\rowcolor[rgb]{0.87,0.94,1}
\textit{DAMI-Conf} & 91.8 & 1050 & 56 & 55 & \textbf{89} & 3526 & 60 & 62 & \textbf{87.5} & 5299 & 62 & 78 & \textbf{44.2} & 13835 & 77 & 85 & \textbf{30} & 16174 & 75 & 90 & \textbf{68.9} & 66 & 74 \\

 \bottomrule
\end{tabular}
}
\caption{Experimental results on two model pairs (Qwen3-4B and Qwen2.5-7B) across five mathematical reasoning benchmarks. "Acc" denotes accuracy, "Tok" denotes token count, "CR" denotes compression rate, and "Think" denotes thinking ratio. $\uparrow$/$\downarrow$ indicate that higher/lower values are better. The top-2 best results are highlighted in \textbf{bold}. The result is statistically significant with $p$-value $<$ 0.05.}
\label{baselines}
\end{table*}

\subsection{Training-free Estimation of $\lambda$}



\subsubsection{Baseline: Prompt-based Estimation}

A straightforward approach is to leverage the model's in-context 
learning for difficulty assessment. We design a zero-shot prompt 
that instructs the Instruct model to rate query difficulty on a 
scale from 1 to 9 (see Figure \ref{fig:rating_prompt} in Appendix \ref{DAMI-Prompt}), then linearly scale to obtain 
$\lambda(q)$. However, this method depends heavily on instruction-following 
capability and generalizes poorly across model families.

\subsubsection{DAMI-Conf: Confidence-based Estimation}

Considering the above drawbacks, we propose to derive $\lambda(q)$ from intrinsic confidence signals, 
offering two advantages: model-agnostic signals that generalize 
across architectures, and continuous-valued output for fine-grained control.

\paragraph{Confidence Extraction.}
Following \cite{yang2025dynamicearlyexitreasoning}, we define 
confidence as the geometric mean of maximum token probabilities 
over the answer sequence $\bm{a} = (a_1, \dots, a_n)$:
\begin{equation}
\mathcal{C}(\mathcal{M}, q) = \left( \prod_{t=1}^{n} 
\max_{a_t \in \mathcal{V}} \text{softmax}(\mathcal{M}(q, i, \bm{a_{<t}}))_t \right)^{1/n}
\end{equation}
where $i$ is an answer-inducing prompt like "Final answer is: \textbackslash boxed\{". We compute $\mathcal{C}^I(q)$ 
and $\mathcal{C}^T(q)$ for the Instruct and Thinking models respectively.

\paragraph{Dual-Signal Fusion.}
We construct $\lambda(q)$ from two complementary signals that 
capture different aspects of query difficulty.
\textit{Holistic Ambiguity} measures overall system uncertainty. 
When neither model is confident, the query is likely challenging 
and requires deeper reasoning:
\begin{equation}
S_{\text{ambi}}(q) = 1 - \frac{\mathcal{C}^I(q) + \mathcal{C}^T(q)}{2}
\end{equation}
\textit{Cognitive Discrepancy} quantifies the disagreement between 
models by measuring the difference in their confidence. 
A high discrepancy suggests that the two models employ different 
strategies for addressing the query, indicating potential ambiguity 
that requires extensive reasoning to resolve:
\begin{equation}
S_{\text{dis}}(q) = |\mathcal{C}^I(q) - \mathcal{C}^T(q)|
\end{equation}

The fused signal $S_{\text{final}} = S_{\text{ambi}} + S_{\text{dis}}$ 
is transformed via calibrated sigmoid to yield Reasoning Intensity:
\begin{equation}
\lambda(q) = \sigma \left( \frac{S_{\text{final}}(q) - \mu}{\tau} \right)
\end{equation}
where $\mu$ and $\tau$ control the decision boundary and sensitivity.

\section{Experiments}

\subsection{Experimental Setup}

\textbf{Models and Datasets.}
We conduct experiments on the following two pairs of open-source models: (Qwen3-4B-2507-Thinking, Qwen3-4B-2507-Instruct) and (DeepSeek-R1-Distill-Qwen-7B, Qwen2.5-Math-7B). The evaluation benchmarks primarily consist of mathematical reasoning tasks, encompassing five specific datasets ranging from simple to difficult: GSM8K \citep{cobbe2021trainingverifierssolvemathgsm8k}, MATH-500 \citep{math500hendrycks2021measuringmathematicalproblemsolving}, AMC 2023 \citep{AMC2023}, AIME 2024, and AIME 2025 \citep{aime}. For the training of the router model and reward model, we select training data from DeepMath \citep{he2025deepmath103klargescalechallengingdecontaminated}, which is out-of-distribution (OOD) relative to the evaluation sets. Since this dataset inherently contains difficulty labels, we first conduct an initial screening based on these labels, selecting an average of 1,000 samples across various difficulty levels for constructing the training data described in Section \ref{3.2}.



\noindent \textbf{Baselines.} We compare our methods (DAMI-Pref, DAMI-Conf) against three categories of approaches: (1) \textbf{output control methods}, including the early-exit approach DEER \citep{yang2025dynamicearlyexitreasoning}; (2) \textbf{static capability control methods}, including Task Arithmetic (TA) \citep{ilharco2023editingmodelstaskarithmetic}, TIES-Merging (TIES) \citep{yadav2023tiesmergingresolvinginterferencemerging}, and Model Interpolation (MI) \citep{Wu2025revisiting}, which apply fixed merging coefficients across all queries; and (3) \textbf{dynamic capability control methods}, including prompt-based LLM Routing, as well as two strong dynamic merging baselines introduced in this work (DAMI-Pred, DAMI-Prompt). We also include the two original models (Instruct, Thinking) as reference endpoints. The hyperparameters for all baseline methods are set strictly according to the original papers.

\begin{figure}[!t]
  \centerline{\includegraphics[scale=0.36]{./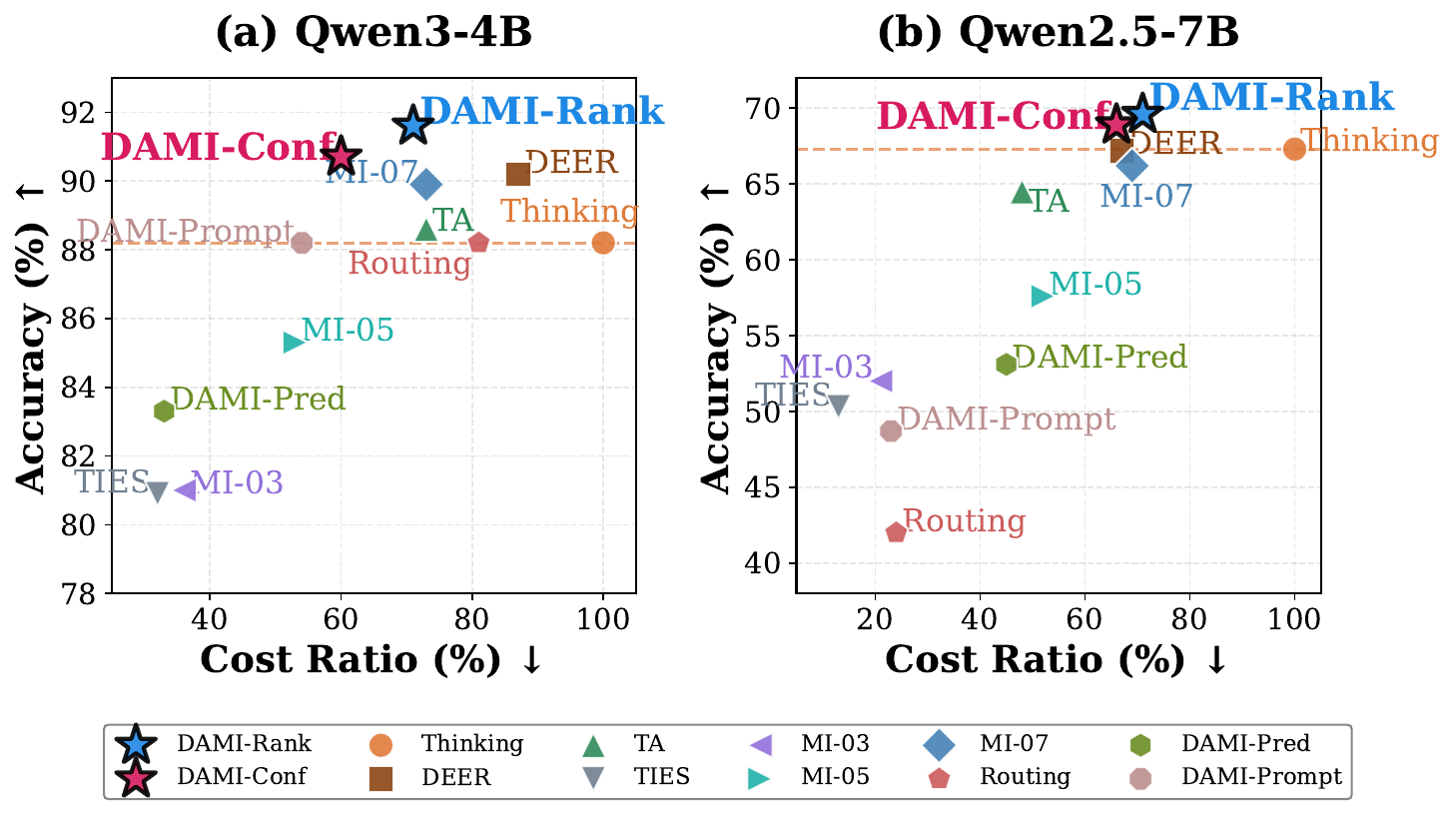}}
  \caption{Accuracy-efficiency scatter plot aggregated over five benchmarks. DAMI methods (starred) achieve the best accuracy-efficiency trade-offs on both Qwen3-4B and Qwen2.5-7B.}
  \label{fig:main_scatter}
\end{figure}  


\begin{figure}[!t]
  \centerline{\includegraphics[scale=0.07]{./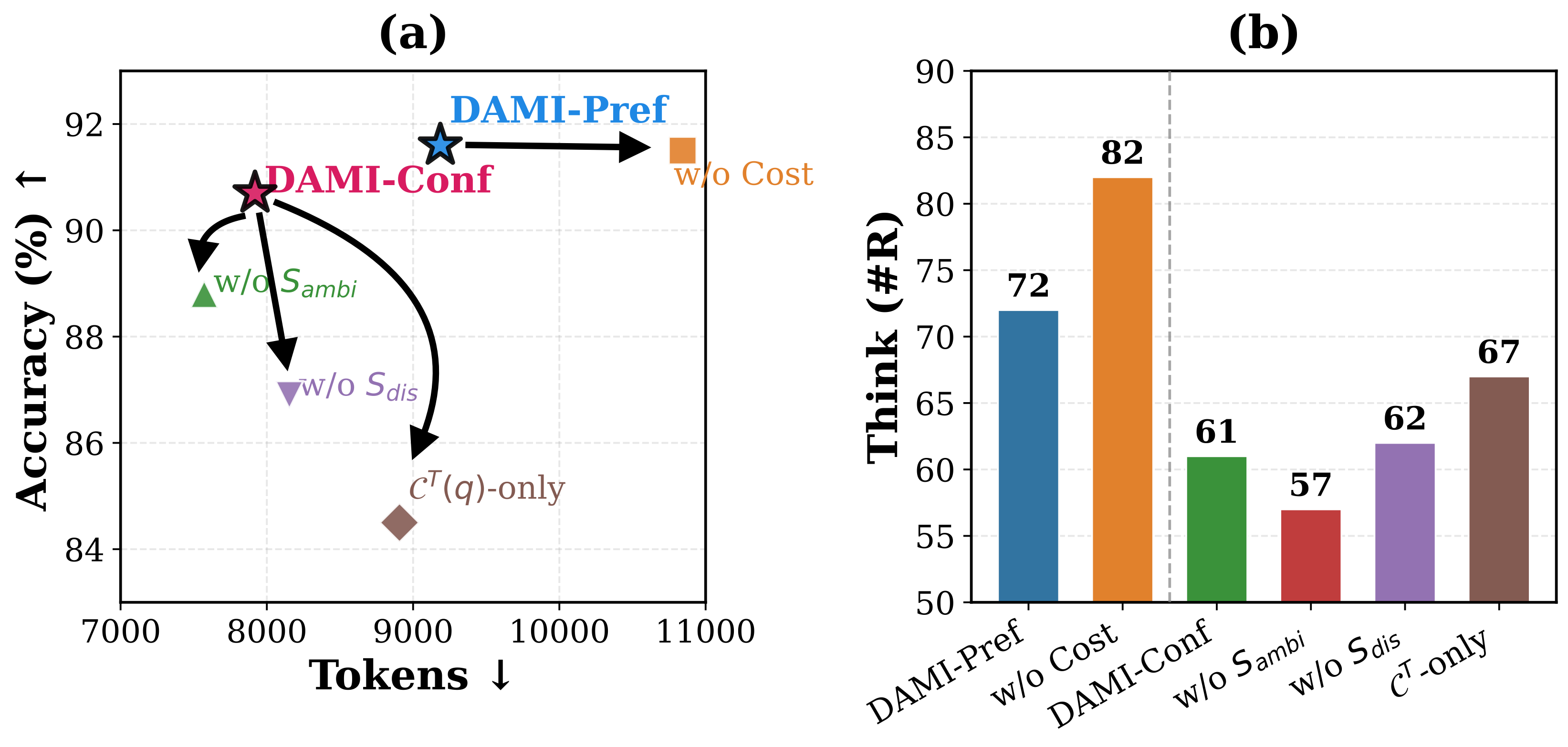}}
  \caption{Ablation study results on the Qwen3-4B model pair.}
  \label{fig:ablation}
\end{figure}

\noindent \textbf{Metrics and Implementations.} We selected \textit{Accuracy} (\textbf{Acc n$\times$}), \textit{Token Number} (\textbf{Tok \#N}), and \textit{Thinking Ratio} (\textbf{Think \#R}) as our evaluation metrics. \textbf{Acc n$\times$} denotes the final answer accuracy, and n $\times$ represents the number of sampling rounds conducted for evaluation.
\textbf{Tok \#N} indicates the average number of generated tokens per sample, serving as a measure of computational cost. \textbf{Think \#R} is defined as the percentage of responses containing the $\langle\text{/think}\rangle$ token, quantifying the prevalence of explicit CoT reasoning. Given the limited sample sizes in the AMC 2023, AIME 2024, and AIME 2025 datasets, we perform 8 sampling rounds per instance and average the results across all metrics to ensure statistical stability and reliability. Across all models, we consistently apply the hyperparameters $T=0.6$ and Top-p$=0.95$. For confidence-based estimation, we set $\mu=0.3$ and $\tau=0.3$.

\begin{figure}[!t]
  \centerline{\includegraphics[scale=0.5]{./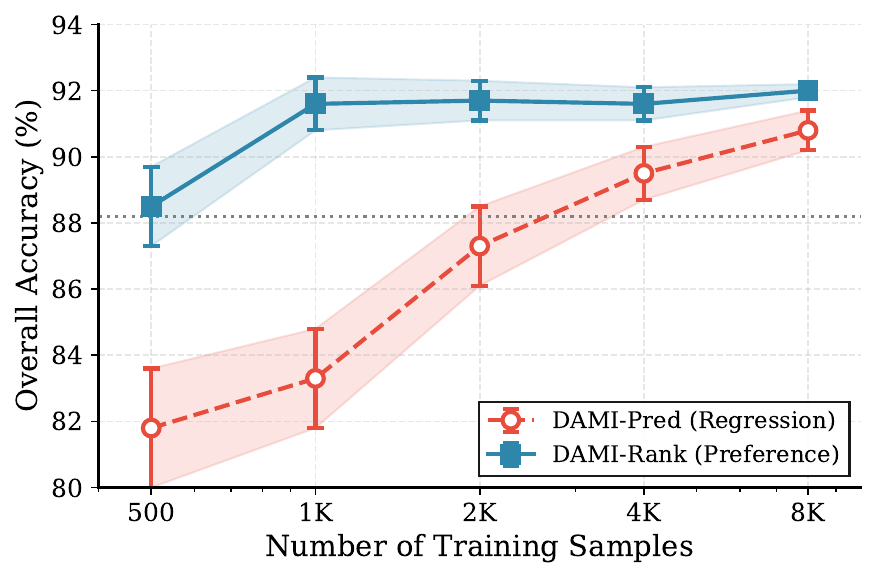}}
  \caption{Effect of training data size on training-based $\lambda$ estimation.}
  \label{fig:data_efficiency}
\end{figure}

\subsection{Experimental Results}

\noindent \textbf{Overall Performance.}
As shown in Table~\ref{baselines}, DAMI-Pref and DAMI-Conf 
consistently achieve the best performance across both model 
families. On Qwen3-4B pair, DAMI-Pref attains 91.6\% accuracy (+3.4\% absolute) while 
reducing token consumption by 29\% compared to vanilla 
Thinking. DAMI-Conf further improves efficiency with 40\% 
token reduction while maintaining 90.7\% accuracy (+2.5\% absolute). Figure~\ref{fig:main_scatter} visualizes the overall accuracy-efficiency trade-off: DAMI-Pref and DAMI-Conf consistently occupy the Pareto-optimal region, dominating both output control methods and static capability control methods across both model pairs.

\begin{figure}[!t]
  \centerline{\includegraphics[scale=0.24]{./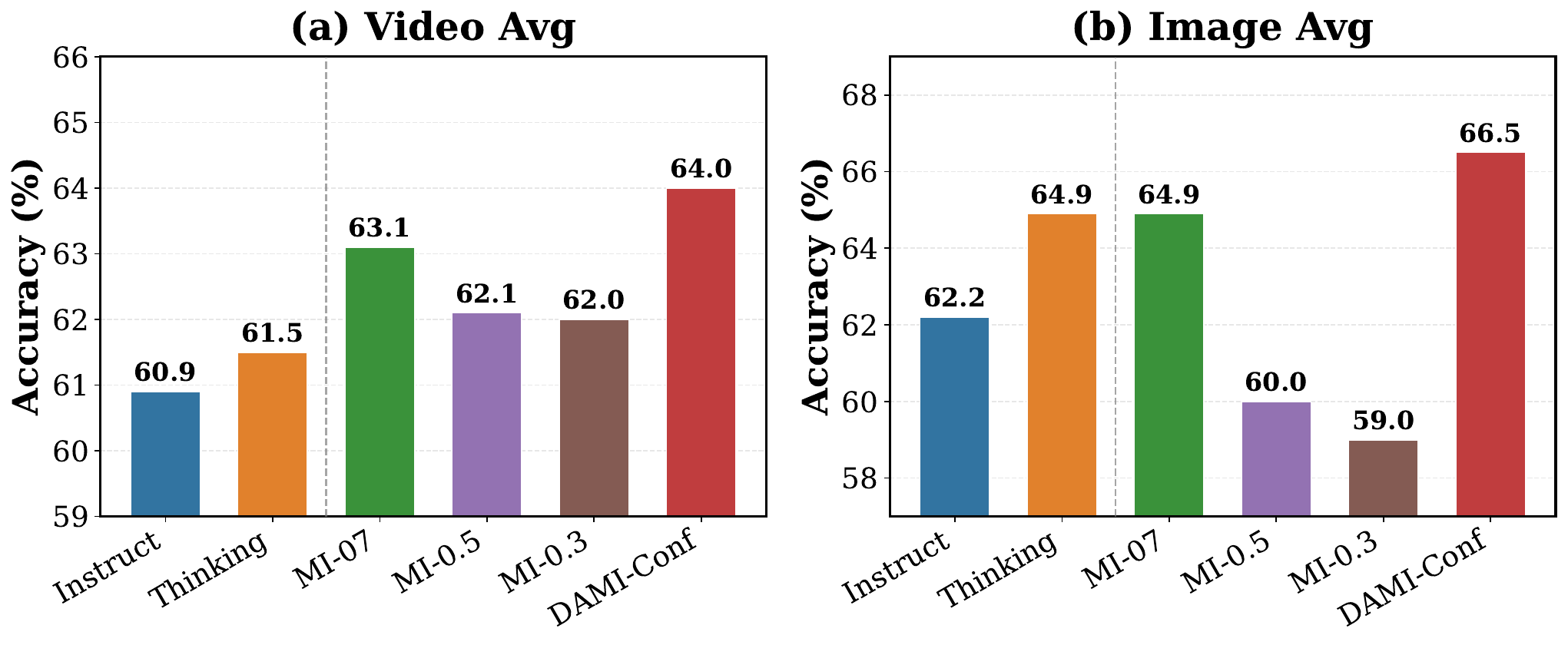}}
  \caption{Generalization to multimodal tasks of DAMI.}
  \label{fig:video_image_avg}
\end{figure}  

\noindent \textbf{Comparison of Strong Dynamic Merging Baselines.} 
The DAMI-Pred consistently underestimates 
required reasoning intensity, achieving only 83.3\% accuracy 
versus 91.6\% for DAMI-Pref on Qwen3-4B. The 
DAMI-Prompt performs reasonably on Qwen3-4B but degrades 
severely on 7B, as Qwen2.5-Math-7B lacks robust 
instruction-following for difficulty assessment. In contrast, 
DAMI-Pref benefits from pairwise comparisons that are robust 
to label noise, while DAMI-Conf leverages model-agnostic 
confidence signals that generalize across architectures.


\begin{figure*}[htbp]
  \centerline{\includegraphics[scale=0.4]{./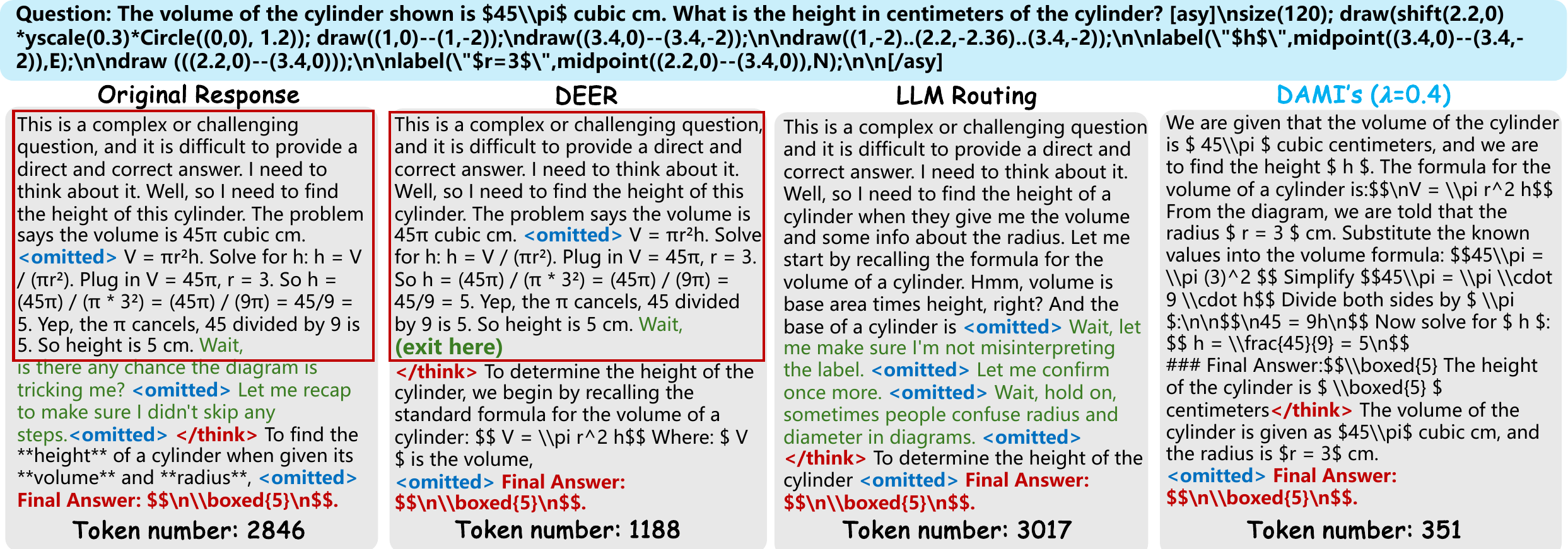}}
  \caption{Comparison of responses generated by four methods. DAMI produces 
 efficient reasoning by modulating cognitive configuration.}
  \label{fig:case}
\end{figure*} 

\begin{figure}[htbp]
  \centerline{\includegraphics[scale=0.43]{./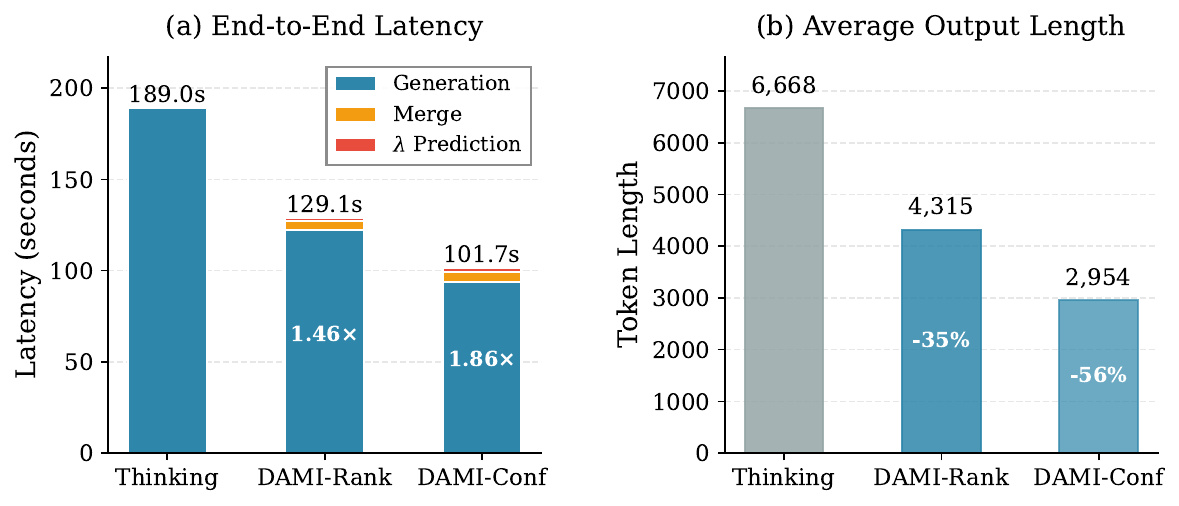}}
  \caption{Computational cost analysis on MATH-500 (Qwen3-4B). (a) End-to-end latency breakdown showing $\lambda$ prediction, parameter merging, and generation time. (b) Average output token length.}
  \label{fig:computational_cost}
\end{figure}

\noindent \textbf{Comparison with SoTA Methods.}
DEER achieves comparable accuracy to DAMI-Pref on Qwen3-4B but 
consumes more tokens and risks truncated reasoning. LLM 
Routing suffers from its discrete nature: on Qwen2.5-7B, 
it collapses to 42.0\% accuracy due to the weak System 1 
model, while our continuous interpolation maintains robust 
performance. Static merging methods (TA, TIES, MI-$\lambda$) 
cannot adapt to varying difficulty distributions. While MI-07 
achieves strong results on Qwen3-4B, our dynamic approach 
consistently outperforms the best static configuration across 
benchmarks.

\noindent \textbf{Analysis of Thinking Ratio.}
The Thinking Ratio reveals adaptive resource allocation. On 
simpler benchmarks like GSM8K, DAMI-Pref and DAMI-Conf activate 
deep thinking for only 25-41\% of queries on Qwen3-4B. As 
difficulty increases toward AIME24/25, the ratio rises to 
79-95\%. This adaptive behavior distinguishes our approach 
from static methods, which either overthink simple queries 
(Thinking, TA) or underthink difficult ones (TIES, MI-03). 
The smooth variation demonstrates that DAMI successfully 
calibrates reasoning depth to query complexity.

\subsection{Analysis}

\noindent \textbf{Ablation study.} Figure~\ref{fig:ablation} presents ablation results on the Qwen3-4B 
model pair, examining the contribution of key design components.
For DAMI-Pref, removing the cost-based tiebreaker (-Cost) slightly 
improves accuracy but increases token 
consumption by 18.0\%. This confirms that 
the hierarchical preference criterion effectively balances 
accuracy and efficiency.
For DAMI-Conf, both signals contribute to the final performance. 
Removing Cognitive Discrepancy ($-S_{dis}$) causes a 3.7\% 
accuracy drop, while removing Holistic Ambiguity ($-S_{ambi}$) 
results in a 1.9\% drop. Using only the Thinking model's 
confidence ($\mathcal{C}^T(q)$-only) degrades accuracy to 
84.5\%, demonstrating that the inter-model comparison is 
essential for reliable difficulty estimation. More detailed results are provided in Table~\ref{ablation} (Appendix).  

\noindent \textbf{Training data efficiency of DAMI-Pref.}
Figure~\ref{fig:data_efficiency} compares DAMI-Pred and DAMI-Pref 
under varying training data sizes. DAMI-Pref achieves near-optimal performance 
with only 1K samples and shows minimal improvement 
beyond, demonstrating strong robustness to limited 
supervision. In contrast, DAMI-Pred requires significantly more 
data to stabilize: it does not surpass the Thinking baseline until 4K samples. This gap 
confirms our motivation for preference-based estimation: 
pairwise comparisons provide more robust learning signals than 
pointwise regression, particularly in low-data regimes where 
label noise has greater impact. 

\noindent \textbf{Case study.} 
Figure~\ref{fig:case} compares responses on a geometry problem 
from MATH-500. The Thinking model produces 2,846 tokens with 
typical overthinking patterns. DEER reduces output to 1,188 tokens through early termination, but cannot skip the lengthy content before the exit marker. This reflects a fundamental limitation of output control methods. LLM Routing, constrained to discrete model 
selection, routes this query to Thinking and yields 3,017 tokens.
In contrast, DAMI generates only 351 tokens with qualitatively different behavior: direct 
problem formulation, concise calculation, and no redundant 
verification. This illustrates the core advantage of capability 
control: DAMI produces 
inherently efficient reasoning by modulating cognitive 
configuration.

\noindent \textbf{Generalization on Multimodal Tasks.}  Beyond text-only models, we evaluate DAMI on the multimodal model Qwen3-VL-8B \citep{Qwen3-VL}. We curate a diverse evaluation suite comprising 5 video understanding benchmarks (VideoHolmes, MMVU, VideoMMMU, VideoMME, LVBench) and 4 image-text reasoning benchmarks (WeMath, MathVista, MMMU, MathVision). As shown in Figure 7, DAMI-Conf consistently outperforms all baselines across both modalities. On video understanding tasks, DAMI-Conf achieves 64.0\% average accuracy, surpassing the Thinking model by 2.5\% absolute. On image-text reasoning tasks, DAMI-Conf attains 66.5\%, a 1.6\% improvement over Thinking. These findings confirm that our DAMI framework generalizes effectively to widely multimodal tasks. Moreover, it reveals that DAMI not only enables efficient reasoning, but also effectively integrates the complementary knowledge and reasoning capabilities of both models, leading to consistent performance gains across diverse benchmarks.

\noindent \textbf{Computational cost.} Figure~\ref{fig:computational_cost} presents end-to-end latency. Despite introducing additional steps for $\lambda$ 
prediction and parameter merging, both methods achieve substantial 
speedups over vanilla Thinking. The overhead from $\lambda$ 
prediction is modest: DAMI-Pref only requires scoring each 
candidate $\lambda$ with a lightweight reward model, while DAMI-Conf  extracts confidence from short answer-only generations 
rather than full reasoning chains. Parameter merging adds $\sim$5.3s 
when loading models on-demand, but this can be reduced to under 1s 
by pre-loading both checkpoints into GPU memory. These 
overheads are negligible compared to generation time, which 
dominates total latency. By reducing output length by 35\% (DAMI-Pref) 
and 56\% (DAMI-Conf), our methods achieve 1.46$\times$ and 
1.86$\times$ end-to-end speedups respectively.

\section{Conclusion}


This paper addresses the challenge that System 1's efficiency and System 2's reasoning capability cannot be readily unified within a single model. Unlike existing methods that constrain what models produce (output control), we propose capability control, which modulates how models think through dynamic parameter interpolation. Our DAMI framework offers two complementary methods, training based DAMI-Pref and training free DAMI-Conf, to achieve query-adaptive reasoning intensity.
Beyond the technical contributions, this work suggests a broader perspective: the dichotomy between fast and slow thinking need not be a binary choice resolved at training time. The monotonicity and continuity properties we establish indicate that reasoning depth exists on a smooth spectrum, accessible through simple linear operations in parameter space. This finding opens possibilities for more flexible cognitive architectures that adapt computational budget.

\bibliographystyle{named}
\bibliography{ijcai25}

@misc{yang2025dynamicearlyexitreasoning,
      title={Dynamic Early Exit in Reasoning Models}, 
      author={Chenxu Yang and Qingyi Si and Yongjie Duan and Zheliang Zhu and Chenyu Zhu and Qiaowei Li and Minghui Chen and Zheng Lin and Weiping Wang},
      year={2025},
      eprint={2504.15895},
      archivePrefix={arXiv},
      primaryClass={cs.CL},
      url={https://arxiv.org/abs/2504.15895}, 
}

@misc{Yao2025acm,
      title={Activation-Guided Consensus Merging for Large Language Models}, 
      author={Yuxuan Yao and Shuqi Liu and Zehua Liu and Qintong Li and Mingyang Liu and Xiongwei Han and Zhijiang Guo and Han Wu and Linqi Song},
      year={2025}, eprint={2505.14009}, archivePrefix={arXiv}
}

@misc{Wu2025revisiting,
      title={Revisiting Model Interpolation for Efficient Reasoning}, 
      author={Taiqiang Wu and Runming Yang and Tao Liu and Jiahao Wang and Ngai Wong},
      year={2025}, eprint={2510.10977}, archivePrefix={arXiv}
}

@misc{wu2025unlockingefficientlongtoshortllm,
      title={Unlocking Efficient Long-to-Short LLM Reasoning with Model Merging}, 
      author={Han Wu and Yuxuan Yao and Shuqi Liu and Zehua Liu and Xiaojin Fu and Xiongwei Han and Xing Li and Hui-Ling Zhen and Tao Zhong and Mingxuan Yuan},
      year={2025},
      eprint={2503.20641},
      archivePrefix={arXiv},
      primaryClass={cs.CL},
      url={https://arxiv.org/abs/2503.20641}, 
}

@misc{li202512surveyreasoning,
      title={From System 1 to System 2: A Survey of Reasoning Large Language Models}, 
      author={Zhong-Zhi Li and Duzhen Zhang and Ming-Liang Zhang and Jiaxin Zhang and Zengyan Liu and Yuxuan Yao and Haotian Xu and Junhao Zheng and Pei-Jie Wang and Xiuyi Chen and Yingying Zhang and Fei Yin and Jiahua Dong and Zhiwei Li and Bao-Long Bi and Ling-Rui Mei and Junfeng Fang and Xiao Liang and Zhijiang Guo and Le Song and Cheng-Lin Liu},
      year={2025},
      eprint={2502.17419},
      archivePrefix={arXiv},
      primaryClass={cs.AI},
      url={https://arxiv.org/abs/2502.17419}, 
}

@misc{yang2025testtimepromptintervention,
      title={Test-time Prompt Intervention}, 
      author={Chenxu Yang and Qingyi Si and Mz Dai and Dingyu Yao and Mingyu Zheng and Minghui Chen and Zheng Lin and Weiping Wang},
      year={2025},
      eprint={2508.02511},
      archivePrefix={arXiv},
      primaryClass={cs.AI},
      url={https://arxiv.org/abs/2508.02511}, 
}

@article{DBLP:journals/corr/abs-2311-15316,
  publtype={informal},
  author={Lanrui Wang and Jiangnan Li and Chenxu Yang and Zheng Lin and Weiping Wang},
  title={Enhancing Empathetic and Emotion Support Dialogue Generation with Prophetic Commonsense Inference},
  year={2023},
  cdate={1672531200000},
  journal={CoRR},
  volume={abs/2311.15316},
  url={https://doi.org/10.48550/arXiv.2311.15316}
}

@inproceedings{yang-etal-2025-weights,
    title = "Weights-Rotated Preference Optimization for Large Language Models",
    author = "Yang, Chenxu  and
      Jia, Ruipeng  and
      Zheng, Mingyu  and
      Gu, Naibin  and
      Lin, Zheng  and
      Chen, Siyuan  and
      Yin, Weichong  and
      Wu, Hua  and
      Wang, Weiping",
    editor = "Christodoulopoulos, Christos  and
      Chakraborty, Tanmoy  and
      Rose, Carolyn  and
      Peng, Violet",
    booktitle = "Proceedings of the 2025 Conference on Empirical Methods in Natural Language Processing",
    month = nov,
    year = "2025",
    address = "Suzhou, China",
    publisher = "Association for Computational Linguistics",
    url = "https://aclanthology.org/2025.emnlp-main.1329/",
    doi = "10.18653/v1/2025.emnlp-main.1329",
    pages = "26152--26175",
    ISBN = "979-8-89176-332-6",
    abstract = "Despite the efficacy of Direct Preference Optimization (DPO) in aligning Large Language Models (LLMs), reward hacking remains a pivotal challenge. This issue emerges when LLMs excessively reduce the probability of rejected completions to achieve high rewards, without genuinely meeting their intended goals. As a result, this leads to overly lengthy generation lacking diversity, as well as catastrophic forgetting of knowledge. We investigate the underlying reason behind this issue, which is representation redundancy caused by neuron collapse in the parameter space. Hence, we propose a novel Weights-Rotated Preference Optimization (RoPO) algorithm, which implicitly constrains the output layer logits with the KL divergence inherited from DPO and explicitly constrains the intermediate hidden states by fine-tuning on a multi-granularity orthogonal matrix. This design prevents the policy model from deviating too far from the reference model, thereby retaining the knowledge and expressive capabilities acquired during pre-training and SFT stages. Our RoPO achieves up to a 3.27-point improvement on AlpacaEval 2, and surpasses the best baseline by 6.2 to 7.5 points on MT-Bench with merely 0.015{\%} of the trainable parameters, demonstrating its effectiveness in alleviating the reward hacking problem of DPO."
}

@misc{deepseekai2025deepseekr1incentivizingreasoningcapability,
      title={DeepSeek-R1: Incentivizing Reasoning Capability in LLMs via Reinforcement Learning}, 
      author={DeepSeek-AI and Daya Guo and Dejian Yang and Haowei Zhang and Junxiao Song and Ruoyu Zhang and Runxin Xu and Qihao Zhu and Shirong Ma and Peiyi Wang and Xiao Bi and Xiaokang Zhang and Xingkai Yu and Yu Wu and Z. F. Wu and Zhibin Gou and Zhihong Shao and Zhuoshu Li and Ziyi Gao and Aixin Liu and Bing Xue and Bingxuan Wang and Bochao Wu and Bei Feng and Chengda Lu and Chenggang Zhao and Chengqi Deng and Chenyu Zhang and Chong Ruan and Damai Dai and Deli Chen and Dongjie Ji and Erhang Li and Fangyun Lin and Fucong Dai and Fuli Luo and Guangbo Hao and Guanting Chen and Guowei Li and H. Zhang and Han Bao and Hanwei Xu and Haocheng Wang and Honghui Ding and Huajian Xin and Huazuo Gao and Hui Qu and Hui Li and Jianzhong Guo and Jiashi Li and Jiawei Wang and Jingchang Chen and Jingyang Yuan and Junjie Qiu and Junlong Li and J. L. Cai and Jiaqi Ni and Jian Liang and Jin Chen and Kai Dong and Kai Hu and Kaige Gao and Kang Guan and Kexin Huang and Kuai Yu and Lean Wang and Lecong Zhang and Liang Zhao and Litong Wang and Liyue Zhang and Lei Xu and Leyi Xia and Mingchuan Zhang and Minghua Zhang and Minghui Tang and Meng Li and Miaojun Wang and Mingming Li and Ning Tian and Panpan Huang and Peng Zhang and Qiancheng Wang and Qinyu Chen and Qiushi Du and Ruiqi Ge and Ruisong Zhang and Ruizhe Pan and Runji Wang and R. J. Chen and R. L. Jin and Ruyi Chen and Shanghao Lu and Shangyan Zhou and Shanhuang Chen and Shengfeng Ye and Shiyu Wang and Shuiping Yu and Shunfeng Zhou and Shuting Pan and S. S. Li and Shuang Zhou and Shaoqing Wu and Shengfeng Ye and Tao Yun and Tian Pei and Tianyu Sun and T. Wang and Wangding Zeng and Wanjia Zhao and Wen Liu and Wenfeng Liang and Wenjun Gao and Wenqin Yu and Wentao Zhang and W. L. Xiao and Wei An and Xiaodong Liu and Xiaohan Wang and Xiaokang Chen and Xiaotao Nie and Xin Cheng and Xin Liu and Xin Xie and Xingchao Liu and Xinyu Yang and Xinyuan Li and Xuecheng Su and Xuheng Lin and X. Q. Li and Xiangyue Jin and Xiaojin Shen and Xiaosha Chen and Xiaowen Sun and Xiaoxiang Wang and Xinnan Song and Xinyi Zhou and Xianzu Wang and Xinxia Shan and Y. K. Li and Y. Q. Wang and Y. X. Wei and Yang Zhang and Yanhong Xu and Yao Li and Yao Zhao and Yaofeng Sun and Yaohui Wang and Yi Yu and Yichao Zhang and Yifan Shi and Yiliang Xiong and Ying He and Yishi Piao and Yisong Wang and Yixuan Tan and Yiyang Ma and Yiyuan Liu and Yongqiang Guo and Yuan Ou and Yuduan Wang and Yue Gong and Yuheng Zou and Yujia He and Yunfan Xiong and Yuxiang Luo and Yuxiang You and Yuxuan Liu and Yuyang Zhou and Y. X. Zhu and Yanhong Xu and Yanping Huang and Yaohui Li and Yi Zheng and Yuchen Zhu and Yunxian Ma and Ying Tang and Yukun Zha and Yuting Yan and Z. Z. Ren and Zehui Ren and Zhangli Sha and Zhe Fu and Zhean Xu and Zhenda Xie and Zhengyan Zhang and Zhewen Hao and Zhicheng Ma and Zhigang Yan and Zhiyu Wu and Zihui Gu and Zijia Zhu and Zijun Liu and Zilin Li and Ziwei Xie and Ziyang Song and Zizheng Pan and Zhen Huang and Zhipeng Xu and Zhongyu Zhang and Zhen Zhang},
      year={2025},
      eprint={2501.12948},
      archivePrefix={arXiv},
      primaryClass={cs.CL},
      url={https://arxiv.org/abs/2501.12948}, 
}

@misc{o1,
    title = {Learning to Reason with LLMs},
    url = {https://openai.com/index/learning-to-reason-with-llms/},
    author = {OpenAI},
    month = {September},
    year = {2024},
}

@misc{yao2025vecinferefficientllminference,
      title={VecInfer: Efficient LLM Inference with Low-Bit KV Cache via Outlier-Suppressed Vector Quantization}, 
      author={Dingyu Yao and Chenxu Yang and Zhengyang Tong and Zheng Lin and Wei Liu and Jian Luan and Weiping Wang},
      year={2025},
      eprint={2510.06175},
      archivePrefix={arXiv},
      primaryClass={cs.CL},
      url={https://arxiv.org/abs/2510.06175}, 
}

@misc{yang2023multileveladaptivecontrastivelearning,
      title={Multi-level Adaptive Contrastive Learning for Knowledge Internalization in Dialogue Generation}, 
      author={Chenxu Yang and Zheng Lin and Lanrui Wang and Chong Tian and Liang Pang and Jiangnan Li and Qirong Ho and Yanan Cao and Weiping Wang},
      year={2023},
      eprint={2310.08943},
      archivePrefix={arXiv},
      primaryClass={cs.CL},
      url={https://arxiv.org/abs/2310.08943}, 
}

@misc{yang2025breakingtradeofffaithfulnessexpressiveness,
      title={Breaking the Trade-Off Between Faithfulness and Expressiveness for Large Language Models}, 
      author={Chenxu Yang and Qingyi Si and Zheng Lin},
      year={2025},
      eprint={2508.18651},
      archivePrefix={arXiv},
      primaryClass={cs.CL},
      url={https://arxiv.org/abs/2508.18651}, 
}

@misc{chen2024multitaskroleplayingagentcapable,
      title={A Multi-Task Role-Playing Agent Capable of Imitating Character Linguistic Styles}, 
      author={Siyuan Chen and Qingyi Si and Chenxu Yang and Yunzhi Liang and Zheng Lin and Huan Liu and Weiping Wang},
      year={2024},
      eprint={2411.02457},
      archivePrefix={arXiv},
      primaryClass={cs.CL},
      url={https://arxiv.org/abs/2411.02457}, 
}

@misc{su2025thinkingfastrightbalancing,
      title={Thinking Fast and Right: Balancing Accuracy and Reasoning Length with Adaptive Rewards}, 
      author={Jinyan Su and Claire Cardie},
      year={2025},
      eprint={2505.18298},
      archivePrefix={arXiv},
      primaryClass={cs.CL},
      url={https://arxiv.org/abs/2505.18298}, 
}

@inproceedings{neyshabur2020what,
  title={What is being transferred in transfer learning?},
  author={Neyshabur, Behnam and Sedghi, Hanie and Zhang, Chiyuan},
  booktitle={Advances in Neural Information Processing Systems},
  volume={33},
  pages={512--523},
  year={2020}
}

@misc{hou2025thinkprunepruninglongchainofthought,
      title={ThinkPrune: Pruning Long Chain-of-Thought of LLMs via Reinforcement Learning}, 
      author={Bairu Hou and Yang Zhang and Jiabao Ji and Yujian Liu and Kaizhi Qian and Jacob Andreas and Shiyu Chang},
      year={2025},
      eprint={2504.01296},
      archivePrefix={arXiv},
      primaryClass={cs.CL},
      url={https://arxiv.org/abs/2504.01296}, 
}

@misc{zhang2025routerr1teachingllmsmultiround,
      title={Router-R1: Teaching LLMs Multi-Round Routing and Aggregation via Reinforcement Learning}, 
      author={Haozhen Zhang and Tao Feng and Jiaxuan You},
      year={2025},
      eprint={2506.09033},
      archivePrefix={arXiv},
      primaryClass={cs.CL},
      url={https://arxiv.org/abs/2506.09033}, 
}

@misc{dai2025sgrpoearlyexitreinforcement,
      title={S-GRPO: Early Exit via Reinforcement Learning in Reasoning Models}, 
      author={Muzhi Dai and Chenxu Yang and Qingyi Si},
      year={2025},
      eprint={2505.07686},
      archivePrefix={arXiv},
      primaryClass={cs.AI},
      url={https://arxiv.org/abs/2505.07686}, 
}

@misc{xia2025tokenskipcontrollablechainofthoughtcompression,
      title={TokenSkip: Controllable Chain-of-Thought Compression in LLMs}, 
      author={Heming Xia and Chak Tou Leong and Wenjie Wang and Yongqi Li and Wenjie Li},
      year={2025},
      eprint={2502.12067},
      archivePrefix={arXiv},
      primaryClass={cs.CL},
      url={https://arxiv.org/abs/2502.12067}, 
}

@misc{luo2025adar1hybridcotbileveladaptive,
      title={Ada-R1: Hybrid-CoT via Bi-Level Adaptive Reasoning Optimization}, 
      author={Haotian Luo and Haiying He and Yibo Wang and Jinluan Yang and Rui Liu and Naiqiang Tan and Xiaochun Cao and Dacheng Tao and Li Shen},
      year={2025},
      eprint={2504.21659},
      archivePrefix={arXiv},
      primaryClass={cs.AI},
      url={https://arxiv.org/abs/2504.21659}, 
}

@misc{shen2025dastdifficultyadaptiveslowthinkinglarge,
      title={DAST: Difficulty-Adaptive Slow-Thinking for Large Reasoning Models}, 
      author={Yi Shen and Jian Zhang and Jieyun Huang and Shuming Shi and Wenjing Zhang and Jiangze Yan and Ning Wang and Kai Wang and Zhaoxiang Liu and Shiguo Lian},
      year={2025},
      eprint={2503.04472},
      archivePrefix={arXiv},
      primaryClass={cs.LG},
      url={https://arxiv.org/abs/2503.04472}, 
}

@misc{akhauri2025splitreasonlearningoffloadreasoning,
      title={SplitReason: Learning To Offload Reasoning}, 
      author={Yash Akhauri and Anthony Fei and Chi-Chih Chang and Ahmed F. AbouElhamayed and Yueying Li and Mohamed S. Abdelfattah},
      year={2025},
      eprint={2504.16379},
      archivePrefix={arXiv},
      primaryClass={cs.CL},
      url={https://arxiv.org/abs/2504.16379}, 
}

@misc{fan2025pricesecondthoughtevaluation,
      title={The Price of a Second Thought: On the Evaluation of Reasoning Efficiency in Large Language Models}, 
      author={Siqi Fan and Bowen Qin and Peng Han and Shuo Shang and Yequan Wang and Aixin Sun},
      year={2025},
      eprint={2505.22017},
      archivePrefix={arXiv},
      primaryClass={cs.CL},
      url={https://arxiv.org/abs/2505.22017}, 
}

@misc{ong2025routellmlearningroutellms,
      title={RouteLLM: Learning to Route LLMs with Preference Data}, 
      author={Isaac Ong and Amjad Almahairi and Vincent Wu and Wei-Lin Chiang and Tianhao Wu and Joseph E. Gonzalez and M Waleed Kadous and Ion Stoica},
      year={2025},
      eprint={2406.18665},
      archivePrefix={arXiv},
      primaryClass={cs.LG},
      url={https://arxiv.org/abs/2406.18665}, 
}

@misc{schumann2023backwardcompatibilitydataupdates,
      title={Backward Compatibility During Data Updates by Weight Interpolation}, 
      author={Raphael Schumann and Elman Mansimov and Yi-An Lai and Nikolaos Pappas and Xibin Gao and Yi Zhang},
      year={2023},
      eprint={2301.10546},
      archivePrefix={arXiv},
      primaryClass={cs.LG},
      url={https://arxiv.org/abs/2301.10546}, 
}

@misc{yang2024adamergingadaptivemodelmerging,
      title={AdaMerging: Adaptive Model Merging for Multi-Task Learning}, 
      author={Enneng Yang and Zhenyi Wang and Li Shen and Shiwei Liu and Guibing Guo and Xingwei Wang and Dacheng Tao},
      year={2024},
      eprint={2310.02575},
      archivePrefix={arXiv},
      primaryClass={cs.LG},
      url={https://arxiv.org/abs/2310.02575}, 
}

@misc{ilharco2023editingmodelstaskarithmetic,
      title={Editing Models with Task Arithmetic}, 
      author={Gabriel Ilharco and Marco Tulio Ribeiro and Mitchell Wortsman and Suchin Gururangan and Ludwig Schmidt and Hannaneh Hajishirzi and Ali Farhadi},
      year={2023},
      eprint={2212.04089},
      archivePrefix={arXiv},
      primaryClass={cs.LG},
      url={https://arxiv.org/abs/2212.04089}, 
}

@misc{wang2020federatedlearningmatchedaveraging,
      title={Federated Learning with Matched Averaging}, 
      author={Hongyi Wang and Mikhail Yurochkin and Yuekai Sun and Dimitris Papailiopoulos and Yasaman Khazaeni},
      year={2020},
      eprint={2002.06440},
      archivePrefix={arXiv},
      primaryClass={cs.LG},
      url={https://arxiv.org/abs/2002.06440}, 
}

@misc{kimiteam2025kimik15scalingreinforcement,
      title={Kimi k1.5: Scaling Reinforcement Learning with LLMs}, 
      author={Kimi Team and Angang Du and Bofei Gao and Bowei Xing and Changjiu Jiang and Cheng Chen and Cheng Li and Chenjun Xiao and Chenzhuang Du and Chonghua Liao and Chuning Tang and Congcong Wang and Dehao Zhang and Enming Yuan and Enzhe Lu and Fengxiang Tang and Flood Sung and Guangda Wei and Guokun Lai and Haiqing Guo and Han Zhu and Hao Ding and Hao Hu and Hao Yang and Hao Zhang and Haotian Yao and Haotian Zhao and Haoyu Lu and Haoze Li and Haozhen Yu and Hongcheng Gao and Huabin Zheng and Huan Yuan and Jia Chen and Jianhang Guo and Jianlin Su and Jianzhou Wang and Jie Zhao and Jin Zhang and Jingyuan Liu and Junjie Yan and Junyan Wu and Lidong Shi and Ling Ye and Longhui Yu and Mengnan Dong and Neo Zhang and Ningchen Ma and Qiwei Pan and Qucheng Gong and Shaowei Liu and Shengling Ma and Shupeng Wei and Sihan Cao and Siying Huang and Tao Jiang and Weihao Gao and Weimin Xiong and Weiran He and Weixiao Huang and Weixin Xu and Wenhao Wu and Wenyang He and Xianghui Wei and Xianqing Jia and Xingzhe Wu and Xinran Xu and Xinxing Zu and Xinyu Zhou and Xuehai Pan and Y. Charles and Yang Li and Yangyang Hu and Yangyang Liu and Yanru Chen and Yejie Wang and Yibo Liu and Yidao Qin and Yifeng Liu and Ying Yang and Yiping Bao and Yulun Du and Yuxin Wu and Yuzhi Wang and Zaida Zhou and Zhaoji Wang and Zhaowei Li and Zhen Zhu and Zheng Zhang and Zhexu Wang and Zhilin Yang and Zhiqi Huang and Zihao Huang and Ziyao Xu and Zonghan Yang and Zongyu Lin},
      year={2025},
      eprint={2501.12599},
      archivePrefix={arXiv},
      primaryClass={cs.AI},
      url={https://arxiv.org/abs/2501.12599}, 
}

@misc{cobbe2021trainingverifierssolvemathgsm8k,
      title={Training Verifiers to Solve Math Word Problems}, 
      author={Karl Cobbe and Vineet Kosaraju and Mohammad Bavarian and Mark Chen and Heewoo Jun and Lukasz Kaiser and Matthias Plappert and Jerry Tworek and Jacob Hilton and Reiichiro Nakano and Christopher Hesse and John Schulman},
      year={2021},
      eprint={2110.14168},
      archivePrefix={arXiv},
      primaryClass={cs.LG},
      url={https://arxiv.org/abs/2110.14168}, 
}

@misc{math500hendrycks2021measuringmathematicalproblemsolving,
      title={Measuring Mathematical Problem Solving With the MATH Dataset}, 
      author={Dan Hendrycks and Collin Burns and Saurav Kadavath and Akul Arora and Steven Basart and Eric Tang and Dawn Song and Jacob Steinhardt},
      year={2021},
      eprint={2103.03874},
      archivePrefix={arXiv},
      primaryClass={cs.LG},
      url={https://arxiv.org/abs/2103.03874}, 
}

@misc{AMC2023,
  title        = {{Amc 2023}},
  author       = {AI-MO},
  year         = 2024,
  url = {https://huggingface.co/datasets/AI-MO/
aimo-validation-amc}
}

@misc{aime,
    title = {AIME Problems and Solutions},
    author={\mbox{MAA Committees}},
    howpublished = {\url{https://artofproblemsolving.com/wiki/index.php/AIME_Problems_and_Solutions}},
}

@misc{he2025deepmath103klargescalechallengingdecontaminated,
      title={DeepMath-103K: A Large-Scale, Challenging, Decontaminated, and Verifiable Mathematical Dataset for Advancing Reasoning}, 
      author={Zhiwei He and Tian Liang and Jiahao Xu and Qiuzhi Liu and Xingyu Chen and Yue Wang and Linfeng Song and Dian Yu and Zhenwen Liang and Wenxuan Wang and Zhuosheng Zhang and Rui Wang and Zhaopeng Tu and Haitao Mi and Dong Yu},
      year={2025},
      eprint={2504.11456},
      archivePrefix={arXiv},
      primaryClass={cs.CL},
      url={https://arxiv.org/abs/2504.11456}, 
}

@misc{yadav2023tiesmergingresolvinginterferencemerging,
      title={TIES-Merging: Resolving Interference When Merging Models}, 
      author={Prateek Yadav and Derek Tam and Leshem Choshen and Colin Raffel and Mohit Bansal},
      year={2023},
      eprint={2306.01708},
      archivePrefix={arXiv},
      primaryClass={cs.LG},
      url={https://arxiv.org/abs/2306.01708}, 
}

@inproceedings{han-etal-2025-token-budget,
    title = "Token-Budget-Aware {LLM} Reasoning",
    author = "Han, Tingxu  and
      Wang, Zhenting  and
      Fang, Chunrong  and
      Zhao, Shiyu  and
      Ma, Shiqing  and
      Chen, Zhenyu",
    editor = "Che, Wanxiang  and
      Nabende, Joyce  and
      Shutova, Ekaterina  and
      Pilehvar, Mohammad Taher",
    booktitle = "Findings of the Association for Computational Linguistics: ACL 2025",
    month = jul,
    year = "2025",
    address = "Vienna, Austria",
    publisher = "Association for Computational Linguistics",
    url = "https://aclanthology.org/2025.findings-acl.1274/",
    doi = "10.18653/v1/2025.findings-acl.1274",
    pages = "24842--24855",
    ISBN = "979-8-89176-256-5",
    abstract = "Reasoning is critical for large language models (LLMs) to excel in a wide range of tasks. While methods like Chain-of-Thought (CoT) reasoning and enhance LLM performance by decomposing problems into intermediate steps, they also incur significant overhead in token usage, leading to increased costs. We find that the reasoning process of current LLMs is unnecessarily lengthy and it can be compressed by including a reasonable token budget in the prompt, but the choice of token budget plays a crucial role in the actual compression effectiveness. We then propose a token-budget-aware LLM reasoning framework that dynamically adjusts the number of reasoning tokens based on the reasoning complexity of each problem. Experiments show that our method effectively reduces token costs in CoT reasoning with only a slight performance reduction, offering a practical solution to balance efficiency and accuracy in LLM reasoning. Code: https://github.com/GeniusHTX/TALE."
}

@misc{zhan2025analyzingrolepermutationinvarianceLMC,
      title={Analyzing the Role of Permutation Invariance in Linear Mode Connectivity}, 
      author={Keyao Zhan and Puheng Li and Lei Wu},
      year={2025},
      eprint={2503.06001},
      archivePrefix={arXiv},
      primaryClass={stat.ML},
      url={https://arxiv.org/abs/2503.06001}, 
}

@misc{frankle2020linearmodeconnectivitylotteryLMC,
      title={Linear Mode Connectivity and the Lottery Ticket Hypothesis}, 
      author={Jonathan Frankle and Gintare Karolina Dziugaite and Daniel M. Roy and Michael Carbin},
      year={2020},
      eprint={1912.05671},
      archivePrefix={arXiv},
      primaryClass={cs.LG},
      url={https://arxiv.org/abs/1912.05671}, 
}

@misc{qwen3technicalreport,
      title={Qwen3 Technical Report}, 
      author={Qwen Team},
      year={2025},
      eprint={2505.09388},
      archivePrefix={arXiv},
      primaryClass={cs.CL},
      url={https://arxiv.org/abs/2505.09388}, 
}

@article{Qwen3-VL,
      title={Qwen3-VL Technical Report}, 
      author={Shuai Bai and Yuxuan Cai and Ruizhe Chen and Keqin Chen and Xionghui Chen and Zesen Cheng and Lianghao Deng and Wei Ding and Chang Gao and Chunjiang Ge and Wenbin Ge and Zhifang Guo and Qidong Huang and Jie Huang and Fei Huang and Binyuan Hui and Shutong Jiang and Zhaohai Li and Mingsheng Li and Mei Li and Kaixin Li and Zicheng Lin and Junyang Lin and Xuejing Liu and Jiawei Liu and Chenglong Liu and Yang Liu and Dayiheng Liu and Shixuan Liu and Dunjie Lu and Ruilin Luo and Chenxu Lv and Rui Men and Lingchen Meng and Xuancheng Ren and Xingzhang Ren and Sibo Song and Yuchong Sun and Jun Tang and Jianhong Tu and Jianqiang Wan and Peng Wang and Pengfei Wang and Qiuyue Wang and Yuxuan Wang and Tianbao Xie and Yiheng Xu and Haiyang Xu and Jin Xu and Zhibo Yang and Mingkun Yang and Jianxin Yang and An Yang and Bowen Yu and Fei Zhang and Hang Zhang and Xi Zhang and Bo Zheng and Humen Zhong and Jingren Zhou and Fan Zhou and Jing Zhou and Yuanzhi Zhu and Ke Zhu},
	  journal={arXiv preprint arXiv:2511.21631},
      year={2025}
}

@misc{arora2025traininglanguagemodelsreason,
      title={Training Language Models to Reason Efficiently}, 
      author={Daman Arora and Andrea Zanette},
      year={2025},
      eprint={2502.04463},
      archivePrefix={arXiv},
      primaryClass={cs.LG},
      url={https://arxiv.org/abs/2502.04463}, 
}

@misc{shukor2025scalinglawsoptimaldata,
      title={Scaling Laws for Optimal Data Mixtures}, 
      author={Mustafa Shukor and Louis Bethune and Dan Busbridge and David Grangier and Enrico Fini and Alaaeldin El-Nouby and Pierre Ablin},
      year={2025},
      eprint={2507.09404},
      archivePrefix={arXiv},
      primaryClass={cs.LG},
      url={https://arxiv.org/abs/2507.09404}, 
}

\appendix

\newtheorem{lemma}{Lemma}
\newtheorem{corollary}{Corollary}
\newtheorem{definition}{Definition}
\newtheorem{proposition}{Proposition}
\newtheorem{assumption}{Assumption}
\newtheorem{remark}{Remark}

\newpage

\section{Theoretical Analysis}
\label{appendix:theory}

In this appendix, we provide theoretical justifications for the three empirical findings in Section 3.1. We establish that (1) linear interpolation between models satisfying Linear Mode Connectivity does not cross loss barriers, (2) performance monotonicity holds under mild assumptions about capability ordering and interpolation linearity, and (3) representation continuity follows from the Lipschitz properties of transformer networks.
\subsection{Interpolation Validity under Linear Mode Connectivity}
\label{appendix:lmc}

We establish why linear parameter interpolation between the 
Instruct and Thinking models yields valid, well-behaved models 
rather than degenerate solutions. 

\subsubsection{Theoretical Background}

Neural network loss landscapes are highly non-convex, containing 
multiple local minima separated by regions of high loss. When 
linearly interpolating between two model parameters, a natural 
concern is whether the interpolation path crosses such high-loss 
regions, resulting in degenerate intermediate models \citep{yao2025vecinferefficientllminference,yang2023multileveladaptivecontrastivelearning}.

\begin{definition}[Loss Barrier]
\label{def:barrier}
Given two parameter configurations $\Theta^{(1)}$ and $\Theta^{(2)}$, 
a \emph{loss barrier} of height $\epsilon$ exists on the linear 
path between them if:
\begin{equation}
\max_{\lambda \in [0,1]} \mathcal{L}(\Theta_\lambda) > 
\max\{\mathcal{L}(\Theta^{(1)}), \mathcal{L}(\Theta^{(2)})\} + \epsilon
\end{equation}
where $\Theta_\lambda = \lambda \Theta^{(2)} + (1-\lambda) \Theta^{(1)}$ 
and $\mathcal{L}(\cdot)$ denotes the task loss.
\end{definition}

\begin{definition}[Linear Mode Connectivity]
\label{def:lmc}
Two models $\Theta^{(1)}$ and $\Theta^{(2)}$ satisfy \emph{Linear 
Mode Connectivity (LMC)} if no significant loss barrier exists on 
the linear path between them, i.e., for all $\lambda \in [0,1]$:
\begin{equation}
\mathcal{L}(\Theta_\lambda) \leq 
\max\{\mathcal{L}(\Theta^{(1)}), \mathcal{L}(\Theta^{(2)})\} + \epsilon
\end{equation}
where $\epsilon$ is a small tolerance threshold.
\end{definition}

The concept of Linear Mode Connectivity was formally introduced 
by \citet{frankle2020linearmodeconnectivitylotteryLMC}, who investigated conditions under 
which neural networks trained with different random seeds remain 
linearly connected in parameter space \citep{chen2024multitaskroleplayingagentcapable,yang2025breakingtradeofffaithfulnessexpressiveness}.

\subsubsection{Existing Theoretical Results}

A crucial result for our work comes from \citet{neyshabur2020what}, 
who studied the properties of models fine-tuned from pre-trained 
checkpoints:

\begin{theorem}[Same-Basin Property of Fine-tuned Models, 
\citealt{neyshabur2020what}]
\label{thm:neyshabur}
Let $\Theta^{(0)}$ be a pre-trained model, and let $\Theta^{(1)}$, 
$\Theta^{(2)}$ be obtained by fine-tuning $\Theta^{(0)}$ on 
(possibly different) downstream tasks or with different 
hyperparameters. Then:
\begin{enumerate}
    \item Both $\Theta^{(1)}$ and $\Theta^{(2)}$ remain in the 
    same basin of the loss landscape as $\Theta^{(0)}$.
    \item The two fine-tuned models are close in parameter space 
    and similar in feature space.
\end{enumerate}
\end{theorem}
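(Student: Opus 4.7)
The plan is to establish both parts through a unified small-perturbation argument, leveraging the fact that fine-tuning induces only a modest displacement in parameter space when starting from a strong pre-trained initialization. First I would operationalize the notion of ``same basin'' via Definition \ref{def:lmc}: two parameter configurations lie in the same basin if the linear interpolant's loss exceeds the endpoint maximum by at most a small tolerance $\epsilon$. With this reformulation, Part 1 reduces to showing that $\mathcal{L}(\lambda \Theta^{(0)} + (1-\lambda) \Theta^{(i)})$ stays close to the endpoints for all $\lambda \in [0,1]$ and $i \in \{1, 2\}$.

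Next I would bound the fine-tuning displacement $\|\Theta^{(i)} - \Theta^{(0)}\|$. Under standard assumptions (bounded stochastic gradients of norm at most $G$, fine-tuning for $T$ steps with learning rate $\eta$), a telescoping bound yields $\|\Theta^{(i)} - \Theta^{(0)}\| \leq \eta T G$, which is small relative to the parameter norm of large pre-trained models operating in the quasi-lazy regime. Combined with local $\beta$-smoothness of the loss around $\Theta^{(0)}$, a second-order Taylor expansion along the interpolation path gives $\mathcal{L}(\Theta_\lambda) \leq \mathcal{L}(\Theta^{(0)}) + O(\beta \|\Theta^{(i)} - \Theta^{(0)}\|^2)$, which certifies the absence of barriers when $\eta T G$ is sufficiently small. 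I expect this to be \emph{the main obstacle}: transformer fine-tuning is not rigorously lazy, so the smoothness constant $\beta$ may be hard to control globally. I would address this by restricting smoothness to the convex hull of visited iterates and supplementing the theoretical bound with the empirical evidence of \citet{frankle2020linearmodeconnectivitylotteryLMC} and our own Figure \ref{fig-pilot}(b), where cosine similarity $> 0.99$ across all layers empirically precludes barriers.

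For Part 2, parameter-space proximity follows immediately from the triangle inequality: $\|\Theta^{(1)} - \Theta^{(2)}\| \leq \|\Theta^{(1)} - \Theta^{(0)}\| + \|\Theta^{(0)} - \Theta^{(2)}\| \leq 2\eta T G$. Feature-space similarity requires one additional step: I would invoke a Lipschitz bound on the forward map $f_\Theta(x)$ with respect to $\Theta$, yielding $\|f_{\Theta^{(1)}}(x) - f_{\Theta^{(2)}}(x)\| \leq \mathrm{Lip}(f; x) \cdot \|\Theta^{(1)} - \Theta^{(2)}\|$, and then translate this into similarity of learned representations (e.g., via CKA or cosine similarity after normalization). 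The subsidiary difficulty here is that naive layerwise Lipschitz products can be vacuous for deep networks, so I would work with normalized (LayerNorm-stabilized) activations where the effective Lipschitz constant is controlled, or adopt the weaker but sufficient notion of representational alignment used in the Neyshabur et al.\ analysis.

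Finally, I would close the argument by noting that the conclusions transfer directly to our setting: treating $\Theta^{(0)}$ as the common Qwen3-Base checkpoint and $\Theta^{(\text{I})}, \Theta^{(\text{T})}$ as its two fine-tuned descendants places them in the same basin with $\Theta^{(0)}$, and by a second invocation of the no-barrier argument (applied to the path $\Theta^{(\text{I})} \to \Theta^{(\text{T})}$, routed infinitesimally through $\Theta^{(0)}$), also with each other. This chain of same-basin memberships is exactly what justifies treating linear interpolation $\Theta^{(\text{M})} = \lambda \Theta^{(\text{T})} + (1-\lambda)\Theta^{(\text{I})}$ as a well-posed control mechanism, rather than one that risks falling off the loss surface.
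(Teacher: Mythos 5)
The paper does not supply a proof of this statement: it is labelled a theorem and credited to \citet{neyshabur2020what}, and the paper takes it as given and only proves the downstream Corollary \ref{cor:lmc_coorigin} and Proposition \ref{prop:our_setting} from it. Moreover, the cited source establishes the claim \emph{empirically} (by measuring interpolation loss barriers and feature-space CKA/similarity between fine-tuned descendants of a shared checkpoint), not by a formal optimization-theoretic derivation. So your proposal is attempting something that neither the paper nor the original reference actually does, and it would be misleading to present it as ``the proof'' of this theorem.

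As a heuristic argument your sketch is reasonable, but it has two structural problems worth flagging. First, you operationalize ``same basin'' directly as the absence of a linear loss barrier, which is precisely Definition \ref{def:lmc}; but the paper treats Theorem \ref{thm:neyshabur} (same basin) and LMC as logically distinct, with Corollary \ref{cor:lmc_coorigin} doing the work of passing from one to the other under a local-convexity-of-basin assumption. Collapsing the two definitions makes the corollary tautological and erases the (admittedly informal) step the paper actually takes. Second, the final ``routed infinitesimally through $\Theta^{(0)}$'' move is not quite right: LMC concerns the \emph{direct} segment $\lambda\Theta^{(1)}+(1-\lambda)\Theta^{(2)}$, not a piecewise-linear path through $\Theta^{(0)}$. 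What you actually want is the observation that every point on the direct segment is a convex combination and hence lies within $\max_i\|\Theta^{(i)}-\Theta^{(0)}\|\le\eta T G$ of $\Theta^{(0)}$, so it suffices to invoke smoothness on that single small ball; the ``routed through'' phrasing obscures this and, as written, proves the wrong thing. Beyond that, your own caveats about the smoothness constant and vacuous Lipschitz products are the real obstacles — without them being controlled, the telescoping bound $\eta T G$ has no teeth, and the argument degenerates into the same empirical appeal the paper already makes (cosine similarity $>0.99$ in Figure \ref{fig-pilot}).
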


This theorem has an important implication for linear interpolation:

\begin{corollary}[LMC for Co-originated Models]
\label{cor:lmc_coorigin}
If $\Theta^{(1)}$ and $\Theta^{(2)}$ are both fine-tuned from the 
same pre-trained checkpoint $\Theta^{(0)}$, then they satisfy 
Linear Mode Connectivity.
\end{corollary}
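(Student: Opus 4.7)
The plan is to derive the corollary as a direct consequence of Theorem \ref{thm:neyshabur} together with a geometric characterization of the basin as a (quasi-)convex sublevel set. First I would formalize the notion of ``basin'' used in the statement of Theorem \ref{thm:neyshabur}: define the basin $\mathcal{B}(\Theta^{(0)})$ containing the pre-trained checkpoint as the connected component of the sublevel set $\{\Theta : \mathcal{L}(\Theta) \le \mathcal{L}(\Theta^{(0)}) + \epsilon_0\}$ that contains $\Theta^{(0)}$, for a tolerance $\epsilon_0$ chosen large enough to encompass standard fine-tuning trajectories. By part (1) of Theorem \ref{thm:neyshabur}, both $\Theta^{(1)}$ and $\Theta^{(2)}$ lie in $\mathcal{B}(\Theta^{(0)})$, and by part (2) they remain close in parameter space.

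Next I would invoke a local (quasi-)convexity argument to control the loss along the chord. Standard analyses of fine-tuned neural networks (e.g., via a second-order Taylor expansion around $\Theta^{(0)}$) show that on a neighborhood containing the fine-tuning endpoints, the loss is well-approximated by a positive semi-definite quadratic, hence convex. Under this local approximation, for any $\lambda \in [0,1]$ the interpolant
\begin{equation}
\Theta_\lambda = \lambda \Theta^{(2)} + (1-\lambda) \Theta^{(1)}
\end{equation}
satisfies $\mathcal{L}(\Theta_\lambda) \le \lambda \mathcal{L}(\Theta^{(2)}) + (1-\lambda)\mathcal{L}(\Theta^{(1)}) \le \max\{\mathcal{L}(\Theta^{(1)}), \mathcal{L}(\Theta^{(2)})\}$, which already yields the LMC inequality with barrier $\epsilon = 0$ in the idealized case. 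To accommodate higher-order terms I would absorb the residual into a small tolerance $\epsilon$, matching Definition \ref{def:lmc}.

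The hard part will be justifying the local convexity assumption rigorously, since transformer loss landscapes are globally non-convex and the basin $\mathcal{B}(\Theta^{(0)})$ is not literally a convex set. My plan to sidestep this is twofold: (i) cite the empirical basin-connectivity evidence established by \citet{neyshabur2020what} and \citet{frankle2020linearmodeconnectivitylotteryLMC}, which shows that the chord between co-fine-tuned checkpoints does not exit the basin in practice, and (ii) appeal to the pilot-study measurement in Section \ref{3.1} showing cosine similarity exceeding $0.99$ across all layers between $\Theta^{(\text{I})}$ and $\Theta^{(\text{T})}$. This high similarity implies that the displacement $\Theta^{(2)} - \Theta^{(1)}$ has small norm relative to the basin radius, so the quadratic approximation is tight along the interpolation path and the residual barrier satisfies $\epsilon = O(\|\Theta^{(2)}-\Theta^{(1)}\|^3)$, which is negligible. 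Combining these steps delivers the LMC condition and closes the corollary.
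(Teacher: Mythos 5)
Your proof is correct and follows essentially the same route as the paper's: invoke Theorem~\ref{thm:neyshabur} to place both fine-tuned checkpoints in the basin $\mathcal{B}$ around $\Theta^{(0)}$, then appeal to approximate local convexity of $\mathcal{B}$ to argue that the linear chord does not exit the basin and hence no significant loss barrier appears. The only difference is that you add extra scaffolding for the convexity step (a second-order Taylor expansion and an informal $O(\|\Theta^{(2)}-\Theta^{(1)}\|^3)$ residual bound), whereas the paper simply states approximate local convexity as a parenthetical assumption; note also that high layer-wise cosine similarity by itself constrains direction rather than displacement norm, so the step from the pilot-study evidence to ``small displacement relative to basin radius'' is slightly looser than you present it, though no looser than the paper's own informal argument.
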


\begin{proof}
By Theorem \ref{thm:neyshabur}, both $\Theta^{(1)}$ and $\Theta^{(2)}$ 
reside in the same optimization basin $\mathcal{B}$ containing 
$\Theta^{(0)}$. Since an optimization basin is a connected region 
of low loss, and both endpoints lie within $\mathcal{B}$, the 
linear path between them does not exit $\mathcal{B}$ (assuming 
$\mathcal{B}$ is approximately convex in the local neighborhood). 
Therefore, no significant loss barrier exists along the 
interpolation path, satisfying the definition of LMC.
\end{proof}

\subsubsection{Application to Our Setting}

We now apply these theoretical results to our specific context.

\begin{proposition}[Interpolation Validity for Instruct-Thinking Pairs]
\label{prop:our_setting}
Let $\Theta^{(\text{I})}$ (Instruct) and $\Theta^{(\text{T})}$ 
(Thinking) be two models derived from the same base pre-trained 
model through different post-training procedures (instruction 
tuning and reasoning enhancement, respectively). Then for any 
$\lambda \in [0,1]$, the interpolated model 
$\Theta^{(\text{M})}_\lambda = \lambda \Theta^{(\text{T})} + 
(1-\lambda) \Theta^{(\text{I})}$ satisfies:
\begin{equation}
\mathcal{L}(\Theta^{(\text{M})}_\lambda) \leq 
\max\{\mathcal{L}(\Theta^{(\text{I})}), 
\mathcal{L}(\Theta^{(\text{T})})\} + \epsilon
\end{equation}
\end{proposition}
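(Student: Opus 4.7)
The plan is to derive Proposition 1 as a direct instantiation of Corollary 1 in our Instruct-Thinking setting. First, I would identify that both $\Theta^{(\text{I})}$ and $\Theta^{(\text{T})}$ arise from fine-tuning a shared pre-trained backbone $\Theta^{(0)}$, differing only in the post-training objective (instruction tuning versus reasoning-oriented alignment). This places the pair squarely in the co-originated regime that Corollary 1 requires, which is the key qualitative input the argument depends on.

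Next, I would invoke Corollary 1 to conclude that $(\Theta^{(\text{I})}, \Theta^{(\text{T})})$ satisfies Linear Mode Connectivity, and then substitute $\Theta^{(1)} = \Theta^{(\text{I})}$ and $\Theta^{(2)} = \Theta^{(\text{T})}$ into Definition 2. The desired bound
\begin{equation*}
\mathcal{L}(\Theta^{(\text{M})}_\lambda) \leq \max\{\mathcal{L}(\Theta^{(\text{I})}), \mathcal{L}(\Theta^{(\text{T})})\} + \epsilon
\end{equation*}
for all $\lambda \in [0, 1]$ then follows immediately, since $\Theta^{(\text{M})}_\lambda$ is by construction the midpoint construction $\lambda \Theta^{(2)} + (1-\lambda) \Theta^{(1)}$ appearing in the definition of a loss barrier.

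The main obstacle is that Corollary 1 tacitly relies on the approximate convexity of the basin along the interpolation segment, which Theorem 1 does not directly furnish; Theorem 1 only localizes both endpoints inside the same basin $\mathcal{B}$, and a straight line between two points of a non-convex set need not stay inside the set. To close this gap, I would supplement the argument with the empirical evidence of Figure 2(b), where per-layer cosine similarity exceeds $0.99$, implying that $\|\Theta^{(\text{T})} - \Theta^{(\text{I})}\|$ is small relative to the curvature radius of $\mathcal{B}$. Under this small-displacement regime, a second-order Taylor expansion of $\mathcal{L}$ around either endpoint bounds the excess loss along the segment by $\tfrac{1}{8}\,\|\Theta^{(\text{T})} - \Theta^{(\text{I})}\|^{2}\,\|H\|_{2}$, where $H$ is the local Hessian. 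Choosing $\epsilon$ to absorb this quadratic term completes the argument and in fact yields a quantitative version of the proposition, explaining why LMC holds tightly in practice despite the non-convexity of the global loss landscape.
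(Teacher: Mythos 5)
Your first two paragraphs reproduce the paper's own proof essentially verbatim: identify the co-origination of $\Theta^{(\text{I})}$ and $\Theta^{(\text{T})}$ from the shared checkpoint $\Theta^{(0)}$, invoke Theorem~\ref{thm:neyshabur} and Corollary~\ref{cor:lmc_coorigin} to obtain LMC, then read off the bound from Definition~\ref{def:lmc}. The paper's proof stops there, adding only a remark that either the Thinking model branches directly from $\Theta^{(0)}$ or sequentially from $\Theta^{(\text{I})}$, and that both cases satisfy the co-origination hypothesis. So up to that point you and the paper are doing the same thing.

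Where you genuinely diverge is in your third paragraph, and it is a useful divergence: you correctly notice that Corollary~\ref{cor:lmc_coorigin}'s proof smuggles in the parenthetical assumption that the basin $\mathcal{B}$ is approximately convex along the segment, which Theorem~\ref{thm:neyshabur} does not by itself supply --- two points inside a connected but nonconvex set can be joined by a segment that exits the set. Your proposed patch is to bound the excess loss along the chord by a second-order Taylor term of the form $\tfrac{1}{8}\|\Theta^{(\text{T})}-\Theta^{(\text{I})}\|^2\|H\|_2$ (the standard linear-interpolation error bound for a twice-differentiable function), then absorb it into $\epsilon$. This buys you something the paper does not have: an explicit, quantitative handle on $\epsilon$ that ties the barrier height to measurable quantities (displacement and curvature), which is strictly more informative than asserting approximate convexity.

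However, there is a weak link in your patch that should be flagged. You infer smallness of $\|\Theta^{(\text{T})}-\Theta^{(\text{I})}\|$ from the per-layer cosine similarity exceeding $0.99$, but cosine similarity constrains only angular alignment, not magnitude: even at cosine $0.99$ with equal norms one gets $\|\Delta\|\approx 0.14\|\Theta\|$, and the paper itself reports that the $L_2$ distance is far from uniform and peaks in middle-to-deep layers. So the displacement is not self-evidently ``small relative to the curvature radius'' --- it depends on the local Hessian spectrum in exactly those high-$\Delta_l$ directions, which neither you nor the paper bound. The honest conclusion is that both your version and the paper's rest on an unverified local-curvature assumption; your version at least surfaces it as an explicit quadratic term rather than hiding it in a parenthetical, but it does not actually close the gap.
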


\begin{proof}
The Instruct and Thinking models in modern LLM families (e.g., 
Qwen, DeepSeek) are typically produced through the following 
pipeline:
\begin{enumerate}
    \item A base model $\Theta^{(0)}$ is pre-trained on large-scale 
    text corpora.
    \item The Instruct model $\Theta^{(\text{I})}$ is obtained by 
    fine-tuning $\Theta^{(0)}$ with instruction-following data.
    \item The Thinking model $\Theta^{(\text{T})}$ is obtained by 
    further training (from $\Theta^{(0)}$ or $\Theta^{(\text{I})}$) 
    with reasoning-enhanced data and reinforcement learning.
\end{enumerate}

In either case, both models originate from the same pre-trained 
checkpoint $\Theta^{(0)}$. By Theorem \ref{thm:neyshabur} and 
Corollary \ref{cor:lmc_coorigin}, they satisfy LMC, which directly 
implies the bounded loss property.
\end{proof}

\subsection{Sufficient Conditions for Performance Monotonicity}
\label{appendix:monotonicity}

We establish theoretical conditions under which increasing the 
reasoning intensity $\lambda$ leads to monotonically improving 
accuracy.

\begin{assumption}[Capability Ordering]
\label{asp:capability}
For any query $q$ from the task distribution $\mathcal{Q}$, let 
$p^{(\text{I})}(q) \in [0,1]$ and $p^{(\text{T})}(q) \in [0,1]$ 
denote the probability of correctly answering $q$ by the Instruct 
and Thinking models respectively. We assume:
\begin{equation}
p^{(\text{T})}(q) \geq p^{(\text{I})}(q), \quad \forall q \in \mathcal{Q}
\end{equation}
\end{assumption}

\begin{remark}
This assumption reflects the empirical observation that System 2 
(Thinking) models, equipped with extended chain-of-thought reasoning, 
generally achieve higher or equal accuracy compared to System 1 
(Instruct) models on reasoning tasks. The assumption allows equality 
for simple queries where both models succeed.
\end{remark}

\begin{assumption}[Bounded Interpolation Deviation]
\label{asp:deviation}
The success probability of the interpolated model deviates from 
linear interpolation by a bounded amount:
\begin{equation}
p^{(\text{M})}_\lambda(q) = \lambda \cdot p^{(\text{T})}(q) + 
(1-\lambda) \cdot p^{(\text{I})}(q) + \delta_\lambda(q)
\end{equation}
where the deviation term $\delta_\lambda(q)$ satisfies:
\begin{enumerate}
    \item \textbf{Boundedness}: $|\delta_\lambda(q)| \leq \delta_{max}$ 
    for all $\lambda \in [0,1]$
    \item \textbf{Lipschitz continuity}: 
    $|\delta_\lambda(q) - \delta_{\lambda'}(q)| \leq L_\delta \cdot |\lambda - \lambda'|$ 
    for all $\lambda, \lambda' \in [0,1]$
\end{enumerate}
\end{assumption}

\begin{remark}
The Lipschitz condition ensures that the deviation from linearity 
changes smoothly with $\lambda$. This is justified by the 
representation continuity established in Section \ref{appendix:lipschitz}: 
since model representations change smoothly with $\lambda$, the 
resulting success probabilities should also change smoothly.
\end{remark}

\begin{theorem}[Monotonicity of Expected Accuracy]
\label{thm:monotonicity}
Under Assumptions \ref{asp:capability} and \ref{asp:deviation}, 
if $L_\delta < \mathbb{E}_{q \sim \mathcal{Q}}[p^{(\text{T})}(q) - 
p^{(\text{I})}(q)]$, then the expected accuracy is monotonically 
non-decreasing in $\lambda$:
\begin{equation}
\frac{\partial}{\partial \lambda} \mathbb{E}_{q \sim \mathcal{Q}}
[p^{(\text{M})}_\lambda(q)] \geq 0
\end{equation}
\end{theorem}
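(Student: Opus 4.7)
The plan is to differentiate the interpolation decomposition in Assumption \ref{asp:deviation} in $\lambda$, interchange derivative and expectation, and then separate the resulting expression into a linear part controlled by the capability gap in Assumption \ref{asp:capability} and a deviation part controlled by the Lipschitz constant $L_\delta$. The strict inequality $L_\delta < \mathbb{E}_{q \sim \mathcal{Q}}[p^{(\text{T})}(q) - p^{(\text{I})}(q)]$ in the hypothesis is precisely the margin needed for the second term not to overwhelm the first.

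Concretely, I would first write, for each fixed $q$,
\begin{equation}
\frac{\partial p^{(\text{M})}_\lambda(q)}{\partial \lambda}
= \bigl(p^{(\text{T})}(q) - p^{(\text{I})}(q)\bigr) + \frac{\partial \delta_\lambda(q)}{\partial \lambda}.
\end{equation}
The Lipschitz condition on $\delta_\lambda$ in $\lambda$ implies, by Rademacher's theorem, that $\delta_\lambda(q)$ is differentiable in $\lambda$ almost everywhere with $\bigl|\partial \delta_\lambda(q)/\partial \lambda\bigr| \le L_\delta$. Combining this with Assumption \ref{asp:capability} and taking expectations yields
\begin{equation}
\frac{\partial}{\partial \lambda} \mathbb{E}_{q \sim \mathcal{Q}}\bigl[p^{(\text{M})}_\lambda(q)\bigr]
\ge \mathbb{E}_{q \sim \mathcal{Q}}\bigl[p^{(\text{T})}(q) - p^{(\text{I})}(q)\bigr] - L_\delta \ge 0,
\end{equation}
where the last inequality is exactly the hypothesis of the theorem.

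The main obstacle I anticipate is rigorously justifying the interchange of differentiation and expectation, since the Lipschitz bound only guarantees almost-everywhere differentiability of $\delta_\lambda(q)$ rather than a uniform derivative. To sidestep this subtlety cleanly, I would actually state and prove an equivalent finite-difference version of monotonicity: for any $0 \le \lambda < \lambda' \le 1$,
\begin{equation}
\mathbb{E}_{q}\bigl[p^{(\text{M})}_{\lambda'}(q) - p^{(\text{M})}_{\lambda}(q)\bigr]
\ge (\lambda' - \lambda)\,\mathbb{E}_{q}\bigl[p^{(\text{T})}(q) - p^{(\text{I})}(q)\bigr] - L_\delta (\lambda' - \lambda) \ge 0,
\end{equation}
which follows directly from the linear decomposition in Assumption \ref{asp:deviation} together with the pointwise Lipschitz bound $|\delta_{\lambda'}(q) - \delta_\lambda(q)| \le L_\delta |\lambda' - \lambda|$. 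The derivative statement in the theorem then follows wherever it exists (a.e.\ in $\lambda$) by a dominated convergence argument, since the difference quotients of $p^{(\text{M})}_\lambda(q)$ are uniformly bounded by $1 + L_\delta$, an integrable envelope on the probability space $\mathcal{Q}$.

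A secondary technical point worth flagging is boundary behavior: Assumption \ref{asp:deviation} should implicitly require $\delta_0(q) = \delta_1(q) = 0$ so that the decomposition matches the endpoints $p^{(\text{I})}(q)$ and $p^{(\text{T})}(q)$; otherwise the claim is about a shifted quantity. I would note this as a consistency condition at the start of the proof, after which the argument above goes through without further modification.
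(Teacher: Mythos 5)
Your proposal follows essentially the same route as the paper's proof: differentiate the decomposition from Assumption \ref{asp:deviation}, bound $\bigl|\partial \delta_\lambda(q)/\partial \lambda\bigr| \le L_\delta$ via the Lipschitz condition, take expectation over $q$, and invoke the hypothesis $L_\delta < \mathbb{E}_{q}[p^{(\text{T})}(q) - p^{(\text{I})}(q)]$. The extra care you take is an improvement over the printed argument rather than a departure from it: the paper differentiates $\delta_\lambda$ and swaps derivative with expectation without comment, whereas you correctly note that Lipschitz continuity only gives almost-everywhere differentiability (Rademacher), recast monotonicity as a finite-difference inequality that avoids the interchange issue entirely, supply the dominated-convergence envelope $1 + L_\delta$, and flag the boundary consistency requirement $\delta_0(q) = \delta_1(q) = 0$ needed for the decomposition to match the endpoints. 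These are real gaps in the paper's proof that your version closes.
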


\begin{proof}
By Assumption \ref{asp:deviation}, the success probability can 
be decomposed as:
\begin{equation}
p^{(\text{M})}_\lambda(q) = \lambda \cdot p^{(\text{T})}(q) + 
(1-\lambda) \cdot p^{(\text{I})}(q) + \delta_\lambda(q)
\end{equation}

Taking the derivative with respect to $\lambda$:
\begin{equation}
\frac{\partial}{\partial \lambda} p^{(\text{M})}_\lambda(q) = 
p^{(\text{T})}(q) - p^{(\text{I})}(q) + 
\frac{\partial \delta_\lambda(q)}{\partial \lambda}
\end{equation}

By the Lipschitz condition in Assumption \ref{asp:deviation}, 
where $\delta_\lambda(q)$ is $L_\delta$-Lipschitz in $\lambda$, 
we have:
\begin{equation}
\left|\frac{\partial \delta_\lambda(q)}{\partial \lambda}\right| 
\leq L_\delta
\end{equation}

Taking expectation over $q \sim \mathcal{Q}$:
\begin{align}
\frac{\partial}{\partial \lambda} \mathbb{E}_{q}[p^{(\text{M})}_\lambda(q)] 
&= \mathbb{E}_{q}\left[p^{(\text{T})}(q) - p^{(\text{I})}(q) + 
\frac{\partial \delta_\lambda(q)}{\partial \lambda}\right] \\
&\geq \mathbb{E}_{q}[p^{(\text{T})}(q) - p^{(\text{I})}(q)] - L_\delta
\end{align}

Under the condition $L_\delta < \mathbb{E}_{q}[p^{(\text{T})}(q) - 
p^{(\text{I})}(q)]$:
\begin{equation}
\frac{\partial}{\partial \lambda} \mathbb{E}_{q}[p^{(\text{M})}_\lambda(q)] 
\geq \mathbb{E}_{q}[p^{(\text{T})}(q) - p^{(\text{I})}(q)] - L_\delta > 0
\end{equation}
\end{proof}

\begin{corollary}[Exact Monotonicity under Linear Interpolation]
\label{cor:exact_mono}
If the interpolation is exactly linear, i.e., $\delta_\lambda(q) = 0$ 
for all $\lambda$ and $q$, then:
\begin{equation}
\frac{\partial}{\partial \lambda} p^{(\text{M})}_\lambda(q) = 
p^{(\text{T})}(q) - p^{(\text{I})}(q) \geq 0
\end{equation}
with strict inequality when $p^{(\text{T})}(q) > p^{(\text{I})}(q)$.
\end{corollary}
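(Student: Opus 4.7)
The plan is to derive the corollary as an immediate consequence of the decomposition in Assumption~\ref{asp:deviation} combined with the pointwise capability ordering of Assumption~\ref{asp:capability}. Unlike Theorem~\ref{thm:monotonicity}, which establishes monotonicity in expectation under a Lipschitz bound on the deviation term, the exact-linear case removes the need for any averaging argument, so the proof should be short and mechanical.

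First, I would substitute $\delta_\lambda(q) = 0$ into the decomposition of Assumption~\ref{asp:deviation}, obtaining the purely affine form
\begin{equation}
p^{(\text{M})}_\lambda(q) = \lambda\, p^{(\text{T})}(q) + (1-\lambda)\, p^{(\text{I})}(q).
\end{equation}
Since this expression is affine in $\lambda$ for every fixed $q$, differentiating under $\lambda$ is justified without any regularity condition, and a direct computation yields
\begin{equation}
\frac{\partial}{\partial \lambda} p^{(\text{M})}_\lambda(q) = p^{(\text{T})}(q) - p^{(\text{I})}(q),
\end{equation}
which establishes the claimed identity.

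Next, I would invoke Assumption~\ref{asp:capability} pointwise: by hypothesis, $p^{(\text{T})}(q) \geq p^{(\text{I})}(q)$ for every $q \in \mathcal{Q}$, so the right-hand side is non-negative. The strict inequality follows immediately in the case $p^{(\text{T})}(q) > p^{(\text{I})}(q)$, since the derivative equals the positive gap $p^{(\text{T})}(q) - p^{(\text{I})}(q)$.

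The main obstacle here is not technical but expository: the result is essentially a pointwise (per-query) strengthening of Theorem~\ref{thm:monotonicity}, obtained by discarding both the expectation over $\mathcal{Q}$ and the deviation term that required the Lipschitz condition $L_\delta < \mathbb{E}_q[p^{(\text{T})}(q) - p^{(\text{I})}(q)]$. I would therefore emphasize in the proof that the condition $\delta_\lambda(q) = 0$ not only removes the need for Lipschitz control but also upgrades the monotonicity from an expectation-level statement to a per-query one, which is the sharper conclusion that justifies stating it as a separate corollary.
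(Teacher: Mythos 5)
Your proof is correct and follows the paper's own argument exactly: set $\delta_\lambda(q)=0$ in the decomposition of Assumption~\ref{asp:deviation}, differentiate the resulting affine function of $\lambda$, and invoke Assumption~\ref{asp:capability} for the sign. The added commentary on why the corollary is a per-query strengthening of Theorem~\ref{thm:monotonicity} is accurate but not part of the paper's proof; the core derivation matches.
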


\begin{proof}
Setting $\delta_\lambda(q) = 0$ in the general formula yields 
$\frac{\partial}{\partial \lambda} p^{(\text{M})}_\lambda(q) = 
p^{(\text{T})}(q) - p^{(\text{I})}(q)$, which is non-negative 
by Assumption \ref{asp:capability}.
\end{proof}

\paragraph{Direct Verification of Monotonicity.}
Figure \ref{fig-pilot}(a) provides direct evidence: accuracy 
increases monotonically with $\lambda$, consistent with the 
prediction of Theorem \ref{thm:monotonicity}.

\subsection{Representation Continuity via Lipschitz Analysis}
\label{appendix:lipschitz}

We explain why interpolation induces continuous trajectories in 
representation space, as observed in Figure \ref{fig-pilot}(b). 
The key insight is that transformer networks are Lipschitz 
continuous with respect to their parameters.

\begin{definition}[Lipschitz Continuity]
\label{def:lipschitz}
A function $f: \mathcal{X} \rightarrow \mathcal{Y}$ is 
$L$-Lipschitz if for all $x_1, x_2 \in \mathcal{X}$:
\begin{equation}
\|f(x_1) - f(x_2)\| \leq L \cdot \|x_1 - x_2\|
\end{equation}
\end{definition}

For a parametric function $f(x; \theta)$, we distinguish two 
types of Lipschitz continuity:
\begin{itemize}
    \item \textbf{Input-Lipschitz}: $\|f(x_1; \theta) - f(x_2; \theta)\| 
    \leq L^{(x)} \|x_1 - x_2\|$
    \item \textbf{Parameter-Lipschitz}: $\|f(x; \theta_1) - f(x; \theta_2)\| 
    \leq L^{(\theta)} \|x\| \cdot \|\theta_1 - \theta_2\|$
\end{itemize}

\subsubsection{Lipschitz Properties of Transformer Components}

\begin{lemma}[Linear Layer]
\label{lem:linear}
For a linear transformation $f(x; W) = Wx$:
\begin{enumerate}
    \item Input-Lipschitz with $L^{(x)} = \|W\|$ (spectral norm)
    \item Parameter-Lipschitz: $\|W_1 x - W_2 x\| \leq \|x\| \cdot \|W_1 - W_2\|$
\end{enumerate}
\end{lemma}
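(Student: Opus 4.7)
The plan is to establish both claims as direct consequences of the defining submultiplicative property of the spectral (operator) norm, so the proof will be essentially two one-line derivations that I intend to frame carefully to make the parallel structure between the two statements transparent.

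For the first claim, I would start by writing $f(x_1; W) - f(x_2; W) = W x_1 - W x_2 = W(x_1 - x_2)$, invoking linearity of $W$ in its input. Then I would apply the definition of the spectral norm $\|W\| := \sup_{v \ne 0} \|Wv\|/\|v\|$, which immediately yields $\|W(x_1 - x_2)\| \le \|W\| \cdot \|x_1 - x_2\|$. Matching this against Definition \ref{def:lipschitz} identifies the constant as $L^{(x)} = \|W\|$.

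For the second claim, I would exploit the fact that the map $W \mapsto Wx$ is itself linear in $W$ for fixed $x$. Concretely, $W_1 x - W_2 x = (W_1 - W_2) x$, and a second application of the operator-norm inequality to the matrix $W_1 - W_2$ gives $\|(W_1 - W_2) x\| \le \|W_1 - W_2\| \cdot \|x\|$. This is exactly the parameter-Lipschitz bound stated in the lemma, with the role of $x$ and $W$ swapped relative to part one.

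The only subtle point, and arguably the main (mild) obstacle, is a notational one: the norm $\|W\|$ appearing on the right-hand side must be interpreted as the operator norm induced by the vector norms on the domain and codomain, not as the Frobenius norm or some other matrix norm. I would add a brief remark clarifying this convention, and note that if one prefers the Frobenius norm the same inequalities hold with an extra (possibly loose) constant since $\|W\|_{\mathrm{op}} \le \|W\|_F$. No induction, case analysis, or nontrivial machinery is needed beyond the operator-norm definition itself, so the proof is essentially a two-line calculation once the norm conventions are fixed.
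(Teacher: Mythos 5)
Your proof is correct and coincides with the paper's own argument: both factor $Wx_1 - Wx_2 = W(x_1-x_2)$ and $W_1x - W_2x = (W_1-W_2)x$ and then apply the operator-norm inequality. The extra remark about norm conventions is a sensible clarification but does not change the substance.
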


\begin{proof}
For input-Lipschitz: $\|Wx_1 - Wx_2\| = \|W(x_1-x_2)\| \leq \|W\| \cdot \|x_1-x_2\|$.

For parameter-Lipschitz: $\|W_1 x - W_2 x\| = \|(W_1-W_2)x\| \leq \|W_1-W_2\| \cdot \|x\|$.
\end{proof}

\begin{lemma}[Activation Functions]
\label{lem:activation}
Common activation functions are Lipschitz continuous:
\begin{itemize}
    \item ReLU: 1-Lipschitz
    \item GeLU: approximately 1.13-Lipschitz
    \item Softmax: 1-Lipschitz (with respect to $\ell_1$ norm)
\end{itemize}
\end{lemma}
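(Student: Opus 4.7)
The plan is to treat each activation separately by bounding its (sub-)gradient and invoking the Mean Value Theorem. For the scalar activations (ReLU and GeLU), the task reduces to showing $\sup_x |\varphi'(x)| \leq L$, which lifts componentwise to any $\ell_p$ norm when applied to vectors. For the vector-valued softmax, I would instead bound an appropriate operator norm of its Jacobian.

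For ReLU, I would give a direct case analysis on the signs of $x_1$ and $x_2$. Whenever $x_1$ and $x_2$ share a sign, the map is affine with slope $0$ or $1$, so the $1$-Lipschitz bound is immediate. In the mixed-sign case, $|\mathrm{ReLU}(x_1) - \mathrm{ReLU}(x_2)|$ equals the magnitude of whichever argument is nonnegative, which is always bounded by $|x_1 - x_2|$. This handles the non-smooth point at the origin without needing a formal subgradient argument.

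For GeLU, expressed as $\mathrm{GeLU}(x) = x\,\Phi(x)$ with $\Phi$ the standard normal CDF and $\phi$ its density, I would compute $\mathrm{GeLU}'(x) = \Phi(x) + x\phi(x)$, locate its critical points by solving $\mathrm{GeLU}''(x) = (2 - x^2)\phi(x) = 0$ at $x = \pm\sqrt{2}$, and numerically evaluate the global supremum at $x = \sqrt{2}$ to obtain $\sup_x \mathrm{GeLU}'(x) \approx 1.129$. A standard check on the asymptotes ($\mathrm{GeLU}'(x) \to 0$ as $x \to -\infty$ and $\to 1$ as $x \to +\infty$) confirms this is indeed the global maximum, and the Mean Value Theorem then delivers the stated constant.

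For softmax $\sigma(x)_i = e^{x_i}/\sum_j e^{x_j}$, I would work from its Jacobian $J_{ij} = \sigma_i(\delta_{ij} - \sigma_j)$ and bound the induced $\ell_1 \to \ell_1$ operator norm, computing column sums $\sum_i |J_{ij}| = 2\sigma_j(1-\sigma_j) \leq 1/2$. The main obstacle will be reconciling this calculation with the stated constant of $1$: the column-sum argument actually yields a tighter $1/2$ under a naive $\ell_1$-to-$\ell_1$ reading, whereas the sharp Lipschitz constant of $1$ is obtained by interpreting the output as a probability distribution and measuring distance in total variation (equivalently $\frac{1}{2}\|\cdot\|_1$), or alternatively by exploiting the simplex constraint $\sum_i \sigma_i = 1$ so that differences live in a hyperplane of dimension $n-1$. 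Fixing the precise domain/codomain norm convention before the Jacobian bound is therefore the most delicate step, and I would resolve it by explicitly stating that softmax is being treated as a map into the probability simplex under total variation.
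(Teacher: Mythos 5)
The paper gives no proof of this lemma; the three constants are quoted as known facts, and the proof sketch of Lemma~\ref{lem:transformer} merely cites Lemma~\ref{lem:activation} as a black box. There is therefore no authorial argument to compare against, and your blind proof stands on its own.

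Your ReLU case analysis and GeLU derivative computation are correct: $\mathrm{GeLU}'(x)=\Phi(x)+x\phi(x)$, critical points at $x=\pm\sqrt{2}$, and $\mathrm{GeLU}'(\sqrt{2})\approx 1.129$. One small omission: the Lipschitz constant is $\sup_x|\mathrm{GeLU}'(x)|$, not $\sup_x \mathrm{GeLU}'(x)$, so you should also evaluate $x=-\sqrt{2}$, where $\mathrm{GeLU}'\approx -0.13$; the magnitude there is smaller, so the constant $1.13$ survives, but the check belongs in the proof. The componentwise lifting of a scalar Lipschitz constant to any $\ell_p$ vector norm is fine as stated.

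The softmax paragraph, however, invents a problem and then offers an incorrect fix. Your own column-sum computation $\|J\|_{\ell_1\to\ell_1}\le\max_j 2\sigma_j(1-\sigma_j)\le 1/2$ already implies $\|\sigma(x)-\sigma(y)\|_1\le\tfrac12\|x-y\|_1\le\|x-y\|_1$: a tighter constant never needs reconciling with a looser stated bound, it establishes it a fortiori. The total-variation recasting you propose actually goes the wrong way: since $\mathrm{TV}=\tfrac12\|\cdot\|_1$, shrinking the output norm by a factor of two makes the Lipschitz constant for that pairing smaller, not larger, so it cannot be the mechanism that recovers the constant $1$. The simplex-constraint observation also changes nothing, because the softmax Jacobian's rows and columns each sum to zero, so its image already lies in the tangent hyperplane and the induced $\ell_1$ computation is unaffected. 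The clean resolution is simply that the lemma's constant of $1$ is valid but not tight under the $\ell_1\to\ell_1$ reading, which is harmless: Lemma~\ref{lem:transformer} and Theorem~\ref{thm:rep_continuity} only ever use it as an upper bound. Drop the TV discussion and record $\|J\|_{\ell_1\to\ell_1}\le 1/2\le 1$.
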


\begin{lemma}[Transformer Layer]
\label{lem:transformer}
A transformer layer $f_l(h; \theta_l)$ consisting of multi-head 
attention and feed-forward sublayers satisfies:
\begin{enumerate}
    \item Input-Lipschitz with constant $L^{(h)}_l$
    \item Parameter-Lipschitz: for bounded input $\|h\| \leq B$,
    \begin{equation}
    \|f_l(h; \theta_{l,1}) - f_l(h; \theta_{l,2})\| \leq 
    L^{(\theta)}_l \cdot B \cdot \|\theta_{l,1} - \theta_{l,2}\|
    \end{equation}
\end{enumerate}
where $L^{(h)}_l$ and $L^{(\theta)}_l$ depend on the layer 
architecture and weight magnitudes.
\end{lemma}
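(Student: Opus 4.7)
The plan is to decompose the transformer layer into its atomic components and then apply the compositional nature of Lipschitz continuity. Writing the layer in pre-norm form as $f_l(h;\theta_l) = h + \mathrm{FFN}(\mathrm{LN}_2(h + \mathrm{MHA}(\mathrm{LN}_1(h))))$, I would treat each sublayer as a composition of linear projections (covered by Lemma \ref{lem:linear}), activations and softmax (covered by Lemma \ref{lem:activation}), LayerNorm, and identity residual connections. Lipschitz constants compose multiplicatively under function composition and additively under residual branches, so once every atomic piece has a bound, the layer-level constants $L^{(h)}_l$ and $L^{(\theta)}_l$ follow by bookkeeping.

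For the input-Lipschitz claim, LayerNorm (in its standard $\epsilon$-stabilized form) is Lipschitz with a constant controlled by the learned scale and the variance floor, linear projections contribute their spectral norms, GeLU and softmax contribute their constants from Lemma \ref{lem:activation}, and the residual connection merely adds $1$. The only delicate component is multi-head attention, since $\mathrm{Attn}(Q,K,V) = \mathrm{softmax}(QK^\top/\sqrt{d})\,V$ is bilinear in $(Q,K)$ versus $V$ and hence not globally Lipschitz in $h$. Here I would use the bounded-input hypothesis $\|h\|\le B$ together with the spectral norms of $W^Q, W^K, W^V, W^O$ to uniformly bound the attention logits, and then bound the Jacobian of softmax (whose entries are $\mathrm{diag}(p)-pp^\top$) followed by multiplication by $V$. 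Multiplying these constants through the sublayer chain yields $L^{(h)}_l$.

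For the parameter-Lipschitz claim, I would introduce $\theta_l = (W^Q, W^K, W^V, W^O, W^{(1)}, W^{(2)}, \gamma, \beta)$ and use a telescoping argument: perturb one block at a time while freezing the others, bound the induced change by Lemma \ref{lem:linear} (which gives $\|(W_1 - W_2)x\| \le \|W_1 - W_2\|\cdot\|x\|$), and then propagate this perturbation through the remaining Lipschitz components using the input-Lipschitz constants already established. The uniform bound $\|h\|\le B$ controls every intermediate activation norm, so each block contributes a term proportional to $B\cdot\|\theta_{l,1}-\theta_{l,2}\|$. Summing the contributions across all parameter blocks (and invoking the triangle inequality on $\|\theta_{l,1}-\theta_{l,2}\|$) collapses them into a single constant $L^{(\theta)}_l$, giving the desired bound.

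The main obstacle will be controlling the attention block cleanly, because softmax attention's local Lipschitz constant grows with the magnitude of the query-key inner products and with the norm of $V$; without the bounded-input hypothesis the bound diverges. I would address this by explicitly tracking how $B$ and the operator norms of the projection matrices enter the bound on the softmax Jacobian, using the fact that $\|\mathrm{diag}(p)-pp^\top\|\le 1$ and that the composition with $V$ inherits $\|V\|\le\|W^V\|\cdot B$. A secondary technical nuisance is that LayerNorm's Lipschitz constant can inflate when the input variance approaches zero; I would sidestep this by assuming the standard $\epsilon$-stabilized definition so that the denominator stays bounded away from zero, which is consistent with every mainstream transformer implementation and therefore imposes no practical restriction on the setting of Section \ref{3.1}.
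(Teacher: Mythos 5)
Your proposal follows the same compositional route as the paper's proof sketch: decompose the layer into linear, activation, normalization, and attention sublayers, apply Lemmas \ref{lem:linear} and \ref{lem:activation} to each, and combine the constants via the chain rule for compositions and the triangle inequality across residual branches. You are, however, more careful than the paper on two points the sketch glosses over --- the bilinear and hence only locally Lipschitz nature of softmax attention (so the bounded-input hypothesis $\|h\|\le B$ is in fact needed for claim~(1), not only claim~(2)) and the $\epsilon$-stabilization required for LayerNorm to be Lipschitz at all --- which is added rigor filling real gaps in the sketch, not a departure from it.
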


\begin{proof}[Proof Sketch]
The transformer layer is a composition of linear transformations, 
attention mechanisms, and activation functions. By the chain rule 
for Lipschitz functions (composition of $L_1$-Lipschitz and 
$L_2$-Lipschitz functions is $L_1 L_2$-Lipschitz), and noting 
that each component is Lipschitz (Lemmas \ref{lem:linear} and 
\ref{lem:activation}), the overall layer inherits Lipschitz 
continuity with respect to both inputs and parameters.
\end{proof}

\begin{theorem}[Representation Continuity]
\label{thm:rep_continuity}
Let $h^{(L)}_\lambda(x)$ denote the final-layer representation of 
input $x$ under the interpolated model $\Theta^{(\text{M})}_\lambda$. 
Assume that intermediate representations are bounded: 
$\|h^{(l)}_\lambda(x)\| \leq B$ for all $l$ and $\lambda$. Then 
there exists a constant $C > 0$ such that for any 
$\lambda, \lambda' \in [0,1]$:
\begin{equation}
\|h^{(L)}_\lambda(x) - h^{(L)}_{\lambda'}(x)\| \leq 
C \cdot |\lambda - \lambda'|
\end{equation}
That is, the representation is Lipschitz continuous with respect 
to the interpolation coefficient $\lambda$.
\end{theorem}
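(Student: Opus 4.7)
The plan is to prove the theorem by layer-wise induction, using the key observation that the interpolated parameters are themselves linear in $\lambda$. Specifically, for each layer $l$, the parameters satisfy $\theta_{l,\lambda} = \lambda \theta_l^{(\text{T})} + (1-\lambda)\theta_l^{(\text{I})}$, which immediately gives $\|\theta_{l,\lambda} - \theta_{l,\lambda'}\| = |\lambda - \lambda'| \cdot \Delta_l$, where $\Delta_l := \|\theta_l^{(\text{T})} - \theta_l^{(\text{I})}\|$. This reduces the problem of continuity in $\lambda$ to continuity in parameters, which is exactly the object addressed by Lemma~\ref{lem:transformer}.

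First I would set $d^{(l)} := \|h^{(l)}_\lambda(x) - h^{(l)}_{\lambda'}(x)\|$ and derive a recurrence via the triangle inequality by inserting an intermediate term that shares the parameters of one model but the input of the other:
\begin{align}
d^{(l+1)} &\leq \|f_{l+1}(h^{(l)}_\lambda; \theta_{l+1,\lambda}) - f_{l+1}(h^{(l)}_\lambda; \theta_{l+1,\lambda'})\| \notag \\
&\quad + \|f_{l+1}(h^{(l)}_\lambda; \theta_{l+1,\lambda'}) - f_{l+1}(h^{(l)}_{\lambda'}; \theta_{l+1,\lambda'})\|.
\end{align}
Applying the parameter-Lipschitz bound of Lemma~\ref{lem:transformer} to the first term (using the boundedness hypothesis $\|h^{(l)}_\lambda\| \leq B$) and the input-Lipschitz bound to the second, this yields the recursion
\begin{equation}
d^{(l+1)} \leq L^{(h)}_{l+1} \cdot d^{(l)} + L^{(\theta)}_{l+1} B \Delta_{l+1} |\lambda - \lambda'|.
\end{equation}

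Next I would unroll the recursion with base case $d^{(0)} = 0$ (the input $x$ is independent of $\lambda$) to obtain
\begin{equation}
d^{(L)} \leq |\lambda - \lambda'| \cdot B \sum_{l=1}^{L} L^{(\theta)}_l \Delta_l \prod_{k=l+1}^{L} L^{(h)}_k,
\end{equation}
so the claimed constant is $C := B \sum_{l=1}^{L} L^{(\theta)}_l \Delta_l \prod_{k=l+1}^{L} L^{(h)}_k$, which is finite since all factors are finite and $L$ is fixed. I would then remark that the LMC property established in Appendix~\ref{appendix:lmc} keeps $\Delta_l$ small, which tightens the constant in practice.

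The main obstacle is controlling the product $\prod_{k} L^{(h)}_k$: if the per-layer input-Lipschitz constants exceed one, this product grows exponentially in depth $L$, yielding a vacuous bound for modern deep transformers. I would address this by invoking two standard mitigations that apply to the architectures in our experiments: (i) residual connections and LayerNorm tend to produce $L^{(h)}_k$ close to one in well-trained networks, and (ii) the boundedness assumption on $\|h^{(l)}_\lambda\|$ is already a normalization condition that implicitly caps the effective layer-wise Lipschitz constant. A secondary subtlety is that Lemma~\ref{lem:transformer} only gives parameter-Lipschitzness at a fixed input; I would verify that the decomposition above indeed evaluates the parameter perturbation term on a single fixed input $h^{(l)}_\lambda$ so the lemma applies verbatim, leaving the input perturbation for the second term handled by the input-Lipschitz clause.
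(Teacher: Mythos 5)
Your proposal is correct and follows essentially the same route as the paper's proof: the same triangle-inequality decomposition into a parameter-perturbation term (evaluated at fixed input $h^{(l)}_\lambda$) and an input-perturbation term, the same layer-wise recursion $D_l \leq L^{(\theta)}_l B \Delta_l |\lambda-\lambda'| + L^{(h)}_l D_{l-1}$, the same unrolling, and the same final constant $C$. Your added remarks about the possibly exponential product $\prod_k L^{(h)}_k$ and the role of residuals/LayerNorm in taming it are a reasonable practical caveat that the paper itself does not address, but they do not change the formal argument.
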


\begin{proof}
We proceed by analyzing the layer-wise propagation of differences.

\textbf{Step 1: Parameter Difference.}
For the interpolated model, the parameters at layer $l$ are:
\begin{equation}
\theta_{l,\lambda} = \lambda \theta^{(\text{T})}_l + 
(1-\lambda) \theta^{(\text{I})}_l
\end{equation}
The parameter difference between two interpolation coefficients is:
\begin{equation}
\|\theta_{l,\lambda} - \theta_{l,\lambda'}\| = 
|\lambda - \lambda'| \cdot \|\theta^{(\text{T})}_l - \theta^{(\text{I})}_l\|
\end{equation}

Let $\Delta_l = \|\theta^{(\text{T})}_l - \theta^{(\text{I})}_l\|$ 
denote the parameter distance at layer $l$.

\textbf{Step 2: Layer-wise Recursion.}
Let $D_l = \|h^{(l)}_\lambda(x) - h^{(l)}_{\lambda'}(x)\|$ denote 
the representation difference at layer $l$. Using the triangle 
inequality:
\begin{align}
D_l &= \|f_l(h^{(l-1)}_\lambda; \theta_{l,\lambda}) - 
f_l(h^{(l-1)}_{\lambda'}; \theta_{l,\lambda'})\| \\
&\leq \underbrace{\|f_l(h^{(l-1)}_\lambda; \theta_{l,\lambda}) - 
f_l(h^{(l-1)}_\lambda; \theta_{l,\lambda'})\|}_{\text{(A): parameter difference}} \\
&\quad + \underbrace{\|f_l(h^{(l-1)}_\lambda; \theta_{l,\lambda'}) - 
f_l(h^{(l-1)}_{\lambda'}; \theta_{l,\lambda'})\|}_{\text{(B): input difference}}
\end{align}

By Lemma \ref{lem:transformer}:
\begin{itemize}
    \item Term (A) $\leq L^{(\theta)}_l \cdot B \cdot 
    \|\theta_{l,\lambda} - \theta_{l,\lambda'}\| = 
    L^{(\theta)}_l \cdot B \cdot |\lambda - \lambda'| \cdot \Delta_l$
    \item Term (B) $\leq L^{(h)}_l \cdot D_{l-1}$
\end{itemize}

This yields the recursion:
\begin{equation}
D_l \leq L^{(\theta)}_l B \Delta_l \cdot |\lambda - \lambda'| + 
L^{(h)}_l \cdot D_{l-1}
\label{eq:recursion}
\end{equation}

\textbf{Step 3: Solving the Recursion.}
Let $\alpha_l = L^{(\theta)}_l B \Delta_l$ and $\beta_l = L^{(h)}_l$ 
for notational simplicity. The recursion becomes:
\begin{equation}
D_l \leq \alpha_l \cdot |\lambda - \lambda'| + \beta_l \cdot D_{l-1}
\end{equation}

For the base case, we consider the embedding layer. If the embedding 
parameters are also interpolated, then $D_0 = \alpha_0 \cdot |\lambda - \lambda'|$ 
for some $\alpha_0$. Otherwise, $D_0 = 0$.

Unrolling the recursion:
\begin{align}
D_L &\leq \alpha_L |\lambda - \lambda'| + \beta_L D_{L-1} \\
&\leq \alpha_L |\lambda - \lambda'| + \beta_L 
(\alpha_{L-1} |\lambda - \lambda'| + \beta_{L-1} D_{L-2}) \\
&= (\alpha_L + \beta_L \alpha_{L-1}) |\lambda - \lambda'| + 
\beta_L \beta_{L-1} D_{L-2} \\
&\leq \ldots \\
&\leq \left( \sum_{l=1}^{L} \alpha_l \prod_{j=l+1}^{L} \beta_j \right) 
|\lambda - \lambda'| + \left(\prod_{j=1}^{L} \beta_j\right) D_0
\end{align}

\textbf{Step 4: Final Bound.}
Defining:
\begin{equation}
C = \sum_{l=0}^{L} \alpha_l \prod_{j=l+1}^{L} \beta_j = 
\sum_{l=0}^{L} L^{(\theta)}_l B \Delta_l \prod_{j=l+1}^{L} L^{(h)}_j
\end{equation}
we obtain:
\begin{equation}
D_L = \|h^{(L)}_\lambda(x) - h^{(L)}_{\lambda'}(x)\| \leq C \cdot |\lambda - \lambda'|
\end{equation}

Since all terms ($L^{(\theta)}_l$, $L^{(h)}_l$, $B$, $\Delta_l$) are 
finite constants, $C$ is a finite constant, establishing Lipschitz 
continuity.
\end{proof}

\begin{remark}[Interpretation of the Constant $C$]
The constant $C$ has a clear structure:
\begin{equation}
C = \sum_{l=0}^{L} \underbrace{L^{(\theta)}_l B \Delta_l}_{\text{direct effect at layer } l} 
\times \underbrace{\prod_{j=l+1}^{L} L^{(h)}_j}_{\text{amplification through subsequent layers}}
\end{equation}
Each layer contributes a term proportional to its parameter distance 
$\Delta_l$, amplified by the Lipschitz constants of all subsequent 
layers. This explains why changes in middle-to-deep layers (where 
$\Delta_l$ is larger, as shown in Figure \ref{fig-pilot}(c)) have 
stronger effects on the final representation.
\end{remark}

\begin{corollary}[Continuous Trajectory in Activation Space]
\label{cor:continuous}
The mapping $\lambda \mapsto h^{(L)}_\lambda(x)$ is continuous. 
Moreover, the set $\{h^{(L)}_\lambda(x) : \lambda \in [0,1]\}$ 
forms a connected path in activation space.
\end{corollary}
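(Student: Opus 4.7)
The plan is to derive both claims of the corollary as essentially immediate consequences of the Lipschitz bound established in Theorem \ref{thm:rep_continuity}, so the work is mostly topological packaging rather than new analysis. I would split the proof into two short parts: (i) pointwise continuity of the map $\lambda \mapsto h^{(L)}_\lambda(x)$, and (ii) connectedness of its image.

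For part (i), I would argue that Theorem \ref{thm:rep_continuity} yields the Lipschitz inequality $\|h^{(L)}_\lambda(x) - h^{(L)}_{\lambda'}(x)\| \leq C \cdot |\lambda - \lambda'|$ uniformly in $\lambda, \lambda' \in [0,1]$ for a fixed input $x$. Continuity is then the standard $\varepsilon$-$\delta$ consequence: given $\lambda_0 \in [0,1]$ and $\varepsilon > 0$, choose $\delta = \varepsilon / (C + 1)$, so that $|\lambda - \lambda_0| < \delta$ implies $\|h^{(L)}_\lambda(x) - h^{(L)}_{\lambda_0}(x)\| < \varepsilon$. I would also note that the argument in fact establishes the stronger property of uniform (in fact Lipschitz) continuity on the compact interval $[0,1]$, which will be convenient for the next step.

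For part (ii), I would invoke two basic topological facts: the closed interval $[0,1] \subset \mathbb{R}$ is connected (indeed path-connected via the identity map), and the continuous image of a connected set under a continuous map into a metric space is connected. Since we have just shown that $\lambda \mapsto h^{(L)}_\lambda(x)$ is continuous from $[0,1]$ into the activation space (viewed as a finite-dimensional Euclidean space with its standard norm topology), the image set $\{h^{(L)}_\lambda(x) : \lambda \in [0,1]\}$ is connected. Moreover, the map itself serves as an explicit continuous path from $h^{(L)}_0(x) = h^{(L)}(x; \Theta^{(\text{I})})$ to $h^{(L)}_1(x) = h^{(L)}(x; \Theta^{(\text{T})})$, so the image is path-connected as well, justifying the phrase "forms a connected path" in the corollary's statement.

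There is essentially no hard step here: the entire difficulty was absorbed into the layerwise recursion of Theorem \ref{thm:rep_continuity}. The only minor subtlety worth flagging is making sure we treat $h^{(L)}_\lambda(x)$ as living in a fixed ambient space (the dimensions of the final hidden states are identical across all $\lambda$ since interpolation preserves architecture), which is needed to make sense of the norm $\|\cdot\|$ used in the Lipschitz bound and of the topology in which connectedness is being asserted. Once that is noted, the proof is a two-line application of Lipschitz-implies-continuous and continuous-image-of-connected-is-connected.
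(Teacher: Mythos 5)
Your proposal is correct and takes essentially the same route as the paper's own two-line proof: Lipschitz continuity from Theorem~\ref{thm:rep_continuity} implies continuity, and the continuous image of the connected interval $[0,1]$ is connected. You simply spell out the $\varepsilon$-$\delta$ bookkeeping and the path-connectedness remark, which the paper leaves implicit.
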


\begin{proof}
Lipschitz continuity implies uniform continuity, which implies 
continuity. The image of a connected set $[0,1]$ under a continuous 
function is connected.
\end{proof}

\subsubsection{Connection to Empirical Observations}

Theorem \ref{thm:rep_continuity} explains the PCA visualization 
in Figure \ref{fig-pilot}(b):

\begin{enumerate}
    \item \textbf{Continuous Trajectory}: The Lipschitz bound ensures 
    that adjacent $\lambda$ values produce nearby representations, 
    resulting in the smooth curve observed in PCA space.
    
    \item \textbf{Correlation with $\lambda$}: Since representation 
    change is bounded by $C \cdot |\lambda - \lambda'|$, the 
    dominant direction of variation aligns with the interpolation 
    axis. This explains the high correlation ($r = 0.974$) between 
    PC1 and $\lambda$.
    
    \item \textbf{Proportional Spacing}: The roughly equal spacing 
    between consecutive $\lambda$ values in PCA space reflects the 
    linear relationship between $|\lambda - \lambda'|$ and 
    representation distance.
\end{enumerate}

\subsection{Benchmark Details.}

\paragraph{\textsc{Mathematical Benchmarks}}.
We utilize a range of benchmark datasets to thoroughly evaluate model capabilities on mathematical reasoning tasks. These benchmarks cover a broad spectrum of difficulty levels, ranging from basic arithmetic operations to challenging competition-grade mathematics.
\begin{itemize}
\item \textbf{GSM8K} comprises 1,319 meticulously selected grade-school mathematics problems, intended to assess models' multi-step reasoning abilities in elementary mathematical contexts. Each problem generally involves two to eight successive operations, primarily grounded in fundamental arithmetic, and necessitates precise computation of intermediate values.
\item \textbf{MATH-500} presents a demanding set of high-school-level problems spanning various mathematical domains, such as Prealgebra, Algebra, and Number Theory. These problems, largely derived from mathematics competitions, require both abstract reasoning and sophisticated logical inference. For consistency with previous studies, we use the 500-problem subset originally established by OpenAI as the evaluation standard.
\item \textbf{AMC 2023} consists of 40 problems drawn from the 2023 American Mathematics Competitions (AMC), an annual event hosted by the Mathematical Association of America (MAA) that seeks to cultivate problem-solving abilities and discover mathematically talented students. The problems encompass fundamental areas including algebra, geometry, number theory, and combinatorics, providing a rigorous benchmark for assessing sophisticated mathematical reasoning.
\item \textbf{AIME 2024 \& 2025} include 60 challenging problems from the 2024 and 2025 editions of the American Invitational Mathematics Examination (AIME). This highly regarded competition tests participants' mathematical reasoning across a wide array of topics, encompassing arithmetic, algebra, counting, geometry, number theory, probability, and various other secondary school mathematics subjects.
\end{itemize}

\paragraph{\textsc{Video Reasoning Benchmarks}}.
To assess the model's ability to perform high-level logic, deduction, and expert-level analysis within video contexts, we employ the following benchmarks:
\begin{itemize}
\item \textbf{Holmes} (Video-Holmes) evaluates complex reasoning and active clue-seeking abilities within the mystery and detective genre. It features suspenseful clips where models must perform multi-hop deduction, detect anomalies, and integrate temporal clues to identify perpetrators and motives, simulating high-level cognitive deduction.
\item \textbf{MMVU} is designed to assess expert-level disciplinary knowledge and reasoning in video understanding. It comprises 3,000 expert-annotated questions spanning four core disciplines—Science, Medicine, Humanities, and Engineering—requiring models to apply textbook-level academic knowledge to solve complex problems.
\item \textbf{Video-MMMU} focuses on the capability of knowledge acquisition and transfer. Utilizing educational videos and lectures, it evaluates the "learning-to-answer" paradigm, measuring how well a model can perceive, understand, and apply new information learned from video content to solve novel questions.
\end{itemize}

\paragraph{\textsc{General Video Understanding Benchmarks}}.
We further evaluate the model's robustness in processing diverse video durations and managing long-term contexts through these comprehensive benchmarks:
\begin{itemize}
\item \textbf{Video-MME} is a full-spectrum benchmark designed to assess multimodal analysis across varying durations, ranging from short clips to hour-long videos. It covers diverse domains such as movies and sports, integrating visual, audio, and subtitle modalities to test temporal reasoning and cross-modal alignment.
\item \textbf{LVBench} targets extreme long-video understanding with high-quality content averaging over an hour in duration. It specifically challenges models' long-term memory and information retrieval capabilities, requiring the extraction of specific entities and events from extensive and redundant temporal contexts.
\end{itemize}

\paragraph{\textsc{Multimodal Mathematical Reasoning Benchmarks}}.
To rigorously evaluate the intersection of visual perception and mathematical logic, we utilize datasets that range from diagnostic tests to expert-level challenges:
\begin{itemize}
\item \textbf{MMMU} evaluates expert-level multimodal reasoning across 30 heterogeneous disciplines, including engineering and clinical medicine. It demands deep domain knowledge to interpret complex visual data—such as chemical structures and medical imaging—combined with sophisticated logical inference.
\item \textbf{MathVista} serves as an omnibus benchmark aggregating diverse mathematical tasks from 28 existing sources. It provides a holistic assessment of visual mathematical reasoning across a wide range of visual contexts, including geometry, function plots, and statistical charts.
\item \textbf{We-Math} is a visual-centric benchmark constructed to address visual redundancy in existing datasets. It emphasizes fine-grained evaluation of multi-step reasoning where visual information is indispensable, serving as a diagnostic tool for true visual grounding in mathematical logic.
\item \textbf{MathVision} is a large-scale, high-quality benchmark sourced from real-world mathematics competitions. Distinguished by rigorous data decontamination and high difficulty, it pushes the upper limits of current models by testing robust visual reasoning in complex, competition-grade scenarios.
\end{itemize}

\section{Hyperparameter Sensitivity Experiments}
\label{hyperparameter_sensitivity}

\noindent \textbf{Hyperparameter Sensitivity of DAMI-Conf.}
Figure~\ref{fig:hyperparameter} investigates the effect of 
decision boundary $\mu$ and temperature $\tau$ on DAMI-Conf 
performance. The heatmap reveals that DAMI-Conf is robust across 
a wide range of hyperparameter configurations. The optimal performance is achieved at $\mu$=0.3 
and $\tau$=0.3, which we adopt as default settings. 
Notably, the central region ($\mu \in [0.2, 0.4]$, 
$\tau \in [0.2, 0.4]$) consistently achieves above 89.5\% 
accuracy, while extreme values at the boundaries lead to 
moderate degradation. This pattern is intuitive: overly small 
$\mu$ biases toward the Thinking model regardless of query 
difficulty, while overly large $\mu$ favors the Instruct model 
even for challenging queries. Similarly, extreme $\tau$ values 
either over-smooth or over-sharpen the $\lambda$ distribution. 
The overall stability across configurations confirms that 
DAMI-Conf does not require careful hyperparameter tuning, 
making it practical for deployment without extensive validation.

\begin{figure}[!t]
  \centerline{\includegraphics[scale=0.5]{./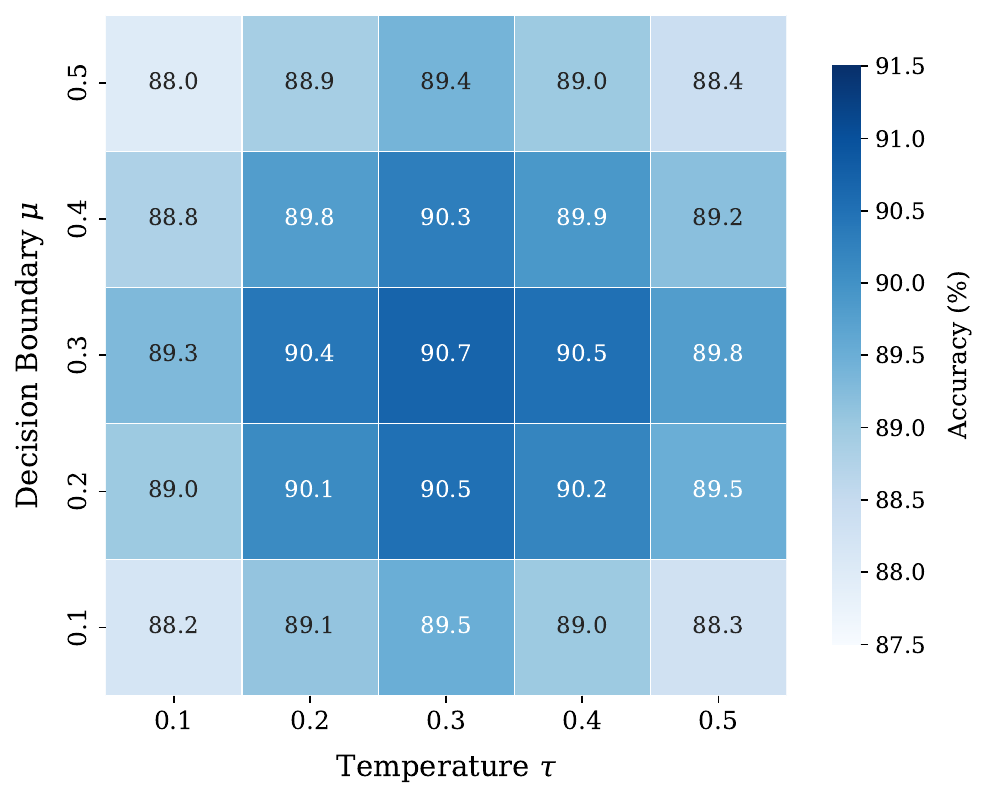}}
  \caption{Investigation on the effect of hyperparameter $\mu$ and $\tau$.}
  \label{fig:hyperparameter}
\end{figure}

{
\setlength{\tabcolsep}{2.5pt}
\begin{table*}[h!]
\centering
\scalebox{0.88}{
\begin{tabular}{@{}lcccccccccccccccccc@{}} 
\toprule
 \multirow{3}{*}{\textbf{Method}} 
 & \multicolumn{3}{c}{\textbf{GSM8K}}& \multicolumn{3}{c}{\textbf{MATH-500}} & \multicolumn{3}{c}{\textbf{AMC}} & \multicolumn{3}{c}{\textbf{AIME24}}  & \multicolumn{3}{c}{\textbf{AIME25}}  & \multicolumn{3}{c}{\textbf{Overall}}
 \\
   & {Acc$\uparrow$} & {Tok$\downarrow$} & {Think} & {Acc$\uparrow$} & {Tok$\downarrow$} & {Think} & Acc$\uparrow$ & Tok$\downarrow$ & {Think} & {Acc}$\uparrow$  & {Tok$\downarrow$} & {Think} & {Acc$\uparrow$} & {Tok$\downarrow$} & {Think} & {Acc$\uparrow$} & {Tok$\downarrow$} &{Think}  \\ 

   & {2$\times$} & \#N & \#R & {2$\times$} & \#N & \#R & {4$\times$} & \#N & \#R & {4$\times$}  & \#N & \#R & {4$\times$} & \#N & \#R & - & \#N & \#R  \\ 
   
 \midrule

  \textit{DAMI-Pref} & 94.5 & 843 & 25 & {96.4} & 4315 & 64 & {98.8} & 7872 & 86 & {85.0} & 15412 & 95 & {83.3} & 17488 & 92 & {91.6} &
  9186 & 72 \\

 w/o Cost & {94.0} & 1123 & 45 & {96.6} & 5102 & 78 & {98.8} & 8943 & 92 & {85.0} & 17821 & 98 & {83.3} & 19234 & 96 & {91.5} & 10843 & 82 \\

 \midrule

\textit{DAMI-Conf}  & 94.5 & 856 & 41 & 96.2 & 2954 & 34 & 98.8 & 5891 & 55 & 85.0 & 13485 & 79 & 79.2 & 16412 & 94 & 90.7 & 7919 & 61 \\

w/o $S_{ambi}$  & 94.1 & 782 & 38 & 95.8 & 3127 & 39 & 97.5 & 5432 & 48 & 81.7 & 12876 & 72 & 75.0 & 15643 & 88 & 88.8 & 7572 & 57 \\

w/o $S_{dis}$  & 94.0 & 889 & 44 & 95.4 & 2765 & 30 & 95.0 & 6102 & 58 & 80.0 & 14123 & 82 & 70.0 & 16892 & 95 & 86.9 & 8154 & 62 \\

$\mathcal{C}^T(q)$-only  & 93.6 & 945 & 48 & 94.6 & 3541 & 42 & 95.0 & 6834 & 64 & 76.7 & 14987 & 85 & 62.5 & 18231 & 97 & 84.5 & 8907 & 67 \\

 \bottomrule
\end{tabular}
}
\caption{Ablation study results on the (Qwen3-4B-2507-Thinking, Qwen3-4B-2507-Instruct) model pair.}
\label{ablation}
\end{table*}
}

\section{Rating Instruction of DAMI-Prompt}
\label{DAMI-Prompt}

\section*{Ethical Statement}

There are no ethical issues.

\begin{figure*}[h]
\centering
\begin{minipage}{0.95\linewidth}
\begin{lstlisting}[
    basicstyle=\small\ttfamily,
    frame=single,
    backgroundcolor=\color{gray!10},
    breaklines=true,
    columns=fullflexible
]
Rate the difficulty of this math problem from 1 to 10 
using these criteria:

Level 1-2 (Very Easy):
- Simple arithmetic (addition, subtraction, 
  multiplication, division)
- Direct application of basic formulas
- Single-step problems

Level 3-4 (Easy):
- Multi-step arithmetic with 2-3 operations
- Basic word problems with clear solution path
- Simple algebra (solving for one variable)

Level 5-6 (Medium):
- Multi-step problems requiring 4+ operations
- Problems involving ratios, proportions, or percentages
- Systems of equations (2 variables)
- Geometry problems with standard formulas

Level 7-8 (Hard):
- Complex multi-step reasoning
- Abstract mathematical concepts
- Problems requiring creative problem-solving approaches
- Advanced algebra, combinatorics, or number theory

Level 9-10 (Very Hard):
- Competition-level mathematics
- Requires deep mathematical insight or multiple 
  advanced techniques
- Abstract reasoning with non-obvious solution paths

Respond with ONLY a single digit (1-10) based on the 
problem's difficulty.

Problem: {question}

Difficulty rating:
\end{lstlisting}
\end{minipage}
\caption{Prompt template for difficulty rating in DAMI-Prompt.}
\label{fig:rating_prompt}
\end{figure*}

\end{document}